\crefname{enumi}{}{}
\crefname{equation}{}{}
\def \R {\mathbb {R}}
\newcommand{\eps}{\varepsilon}
\DeclareMathOperator*{\esssup}{ess\,sup}
\begin{document}

\title{Deep limits and cut-off phenomena for neural networks}

\author{\name Benny Avelin \email benny.avelin@math.uu.se \\
       \addr Department of Mathematics\\
       University of Uppsala\\
       Box 256, 751 05 Uppsala, Sweden
       \AND
       \name Anders Karlsson \email anders.karlsson@unige.ch \\
       \addr Department of Mathematics\\
       University of Uppsala\\
       Box 256, 751 05 Uppsala, Sweden \\
       \addr Section de mathématiques\\
       Université de Genève\\
       Case Postale 64, 1211 Genève 4, Switzerland}

\editor{}

\maketitle

\begin{abstract}%
We consider dynamical and geometrical aspects of deep learning. For many standard choices of layer maps we display semi-invariant metrics which quantify
differences between data or decision functions. This allows us, when considering random layer maps and using non-commutative ergodic theorems, to deduce that certain limits exist when letting the number of layers tend to infinity. We also examine the random initialization of standard networks where we observe a surprising cut-off phenomenon in terms of the number of layers,
the depth of the network. This could be a relevant parameter when choosing an appropriate number of layers for a given learning task, or for
selecting a good initialization procedure. More generally, we hope that the notions and results in this paper can provide a framework,
in particular a geometric one, for a part of the theoretical understanding of deep neural networks.
\end{abstract}

\begin{keywords}
  Deep limits, neural network, deep learning, ergodic theory, metric geometry
\end{keywords}

\section{Introduction}

In this paper, we develop a geometric toolkit which we propose as a means to study neural networks, in particular as the depth tends to infinity. Viewed in this sense, we can deduce properties of the limit of neural networks as the number of layers go to infinity provided that the layers preserve certain distance functions (metrics).

As a starting point we will consider neural networks with random layers, which can occur from dropout \cite{Dropout}, Bayesian neural networks \cite{Bayesian}, neural networks with noise (Neural SDE) \cite{LXSCKH}, or simply random initialization. The assumptions on the dependence between subsequent layers is weak and we only assume stationarity.

Our analysis shows that under the assumption of stationarity, if one can find a metric space for which the ``layer transformations'' of the neural networks is non-expansive, then the limit and its growth rate can be described using powerful tools from ergodic theory.

\subsection{Background}
As is by now well known, certain deep networks perform better than their shallow counterpart, see for instance \cite{HZRS16}. Fairly recently \cite{DoubleDip0} observed a phenomenon later dubbed ``Deep Double Descent'' in \cite{DoubleDip}. The deep double descent means that for deep networks, after a certain width threshold the generalization properties becomes better and better even though the class of networks becomes increasingly complex, \cite{Hornik}. However, as mentioned above, also deeper networks seems better in terms of generalization, which suggests there is a regularizing effect of depth under certain conditions, similar to deep double descent.

The wide limit of neural networks is fairly well studied, see for instance \cite{NealPrior, Mei, NTK, RVE}. However the deep limit is not a particularly well defined concept and there are many different ways to view it. One of the more practically successful ones are the Neural ODEs, introduced by \cite{CRBD18}, which can be seen as a deep limit of residual networks (\cite{TG18,AN19}).
The discrete model for these continuous neural networks can be formulated as
\begin{align} \label{eq:small-time-step}
  x_{t_{i+1}} = x_{t_i} + \frac{1}{n}T_i(x_{t_i}), \quad i = 1,\ldots,n
\end{align}
where $T_i$ represents a layer in the neural network. In the case of Neural SDEs (\cite{LXSCKH, TR}), or in the Bayesian framework (see for instance \cite{Bayesian}) we can view each layer as being random. Furthermore, the special case of i.i.d. random layers is present in the random initialization of the network, and is in fact a very important aspect to understand when it comes to training neural networks, \cite{Sutskever}.

A key observation in the above formulation is that the discrete form \cref{eq:small-time-step} represents an approximation of the ODE
\begin{align}
  \frac{\partial x(t)}{\partial t} = T_t(x_t), \quad t \in [0,1],
\end{align}
i.e. the discrete system is an approximation of a fixed time horizon ODE, with a time-step of size $1/n$. Another point of view is to consider a fixed time-step and consider the behavior of the system as $n \to \infty$, i.e. of
\begin{align} \label{eq:fixed-time-step}
  x_{t+1} = x_{t} + T_t(x_{t}), \quad t = 1,\ldots,n.
\end{align}

In a sense the networks in \cref{eq:fixed-time-step} represent a duality of thought compared to \cref{eq:small-time-step}, in that either we consider a deep neural network as consisting of many distinct layers or we consider a single (time dependent) network being repeated in a recurrent fashion, ``layer'' is represented by time. This connects deep neural networks to the concept of a recurrent neural network and in a sense they are the same, specifically neural ODEs and neural SDEs which are even trained using recurrent back-propagation (real-time recurrent learning), see \cite{CRBD18,LXSCKH, RF,R,TR, WZ}.

In the context of Bayesian neural networks, there has recently been some progress in establishing the deep limits of these as certain Gaussian processes, see \cite{APH,Dunlop,Duvenaud}.

According to \cite{EMW}, at the continuous level many machine learning models are the gradient flow of a reasonably nice functional, and they argue that this is a reason for models such as \cref{eq:small-time-step} (ResNets) are numerically stable, \cite{H,HBFS}. They also suggest that for \cref{eq:fixed-time-step} one should expect trouble since there is no continuum limit. This is true to some degree, but one should keep in mind that even a standard ResNet does not have the scaling factor $1/n$ in front of $T_n$, thus one could argue that it is more reasonable to consider the limit of fixed time-step dynamics.
Of course this may not have a limit but perhaps a certain rescaling does, for instance, one could consider
\begin{align*}
  \lim_{n\to \infty} \frac{1}{n} \log(x_{n}),
\end{align*}
or
\begin{align*}
  \lim_{n \to \infty} \frac{1}{n} x_n.
\end{align*}

\subsection{Our contribution}
In this paper we take the viewpoint of \cref{eq:fixed-time-step} and we rephrase the update equation as $x_{n+1}=T_n(x_n)$. The problem is now one of discrete (possibly chaotic) dynamical systems.
The main contribution of our paper is that we develop a framework to study deep neural networks from a geometric perspective. Specifically it allows us to read out certain stability properties whenever there exists a metric which is preserved by the network layers. Depending on the metrics involved, it can tell us if the networks tend to satisfy some regularity as we go deeper, even though in principle the networks can become arbitrarily complex mappings (\cite{DeepUniversal}). This serves as an indication as to when one would observe the ``Deep Double Descent'' phenomenon with respect to depth.

Note that we study the network directly without worrying about how we obtained said network. We consider stationary sequences of layers, which can cover for instance a sequence correlated layers for which the correlation tends to zero as we go deeper.

In the context of independent, identically distributed (i.i.d.)~random layers, which corresponds to the random initialization of the weights in the network, we perform a few experiments where we observe a cut-off phenomenon.
This means that for a given type
of neural network there is a certain number of layers where the network behaves very differently. We call this its \emph{cut-off depth}. The cut-off phenomenon was first discussed in
card shuffling by \cite{AD} who showed that seven shuffles are enough. It remains to understand the full significance of
the cut-off depth for deep learning.
\begin{figure}
    \centering
    \includegraphics[width=5cm]{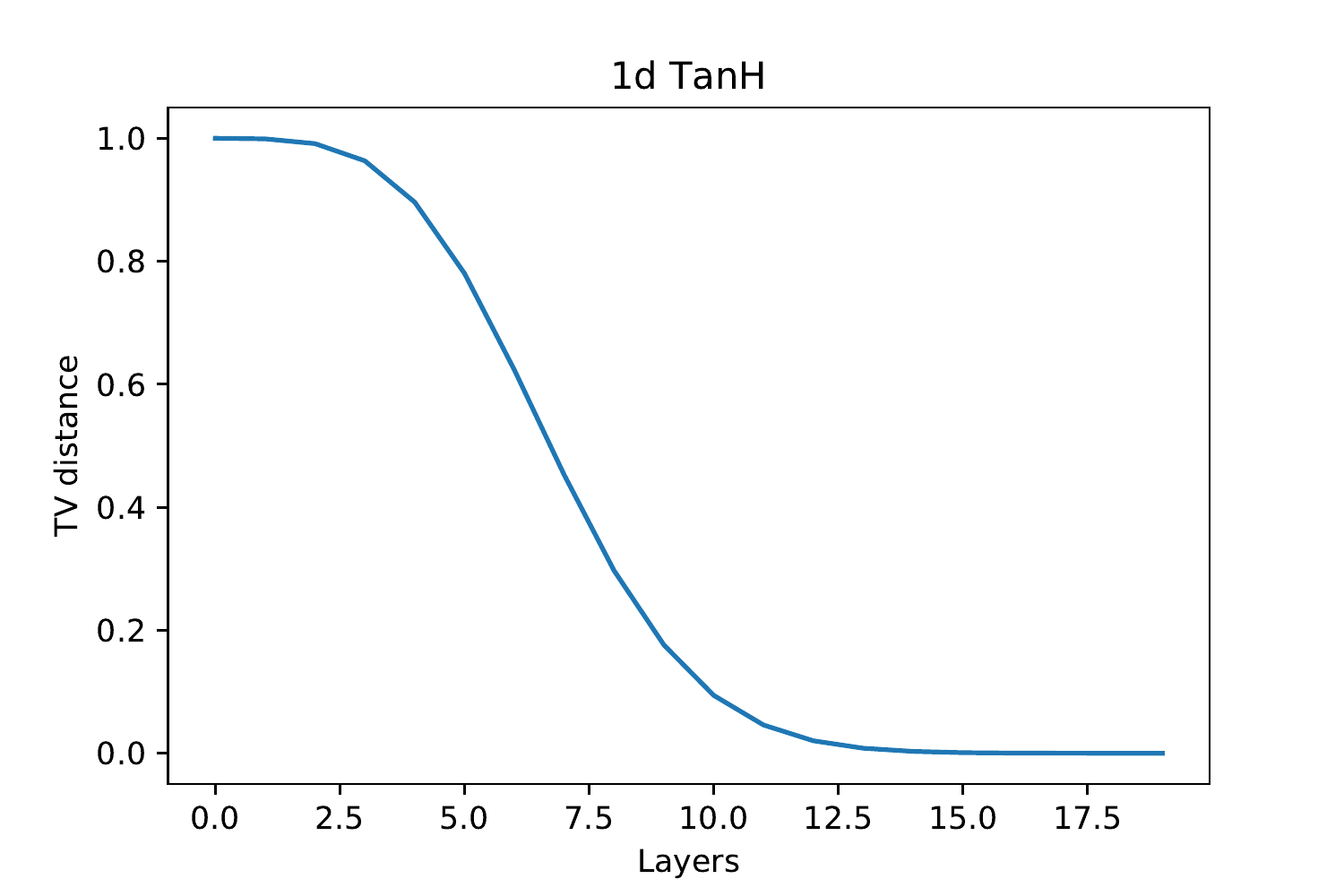}
    \includegraphics[width=5cm]{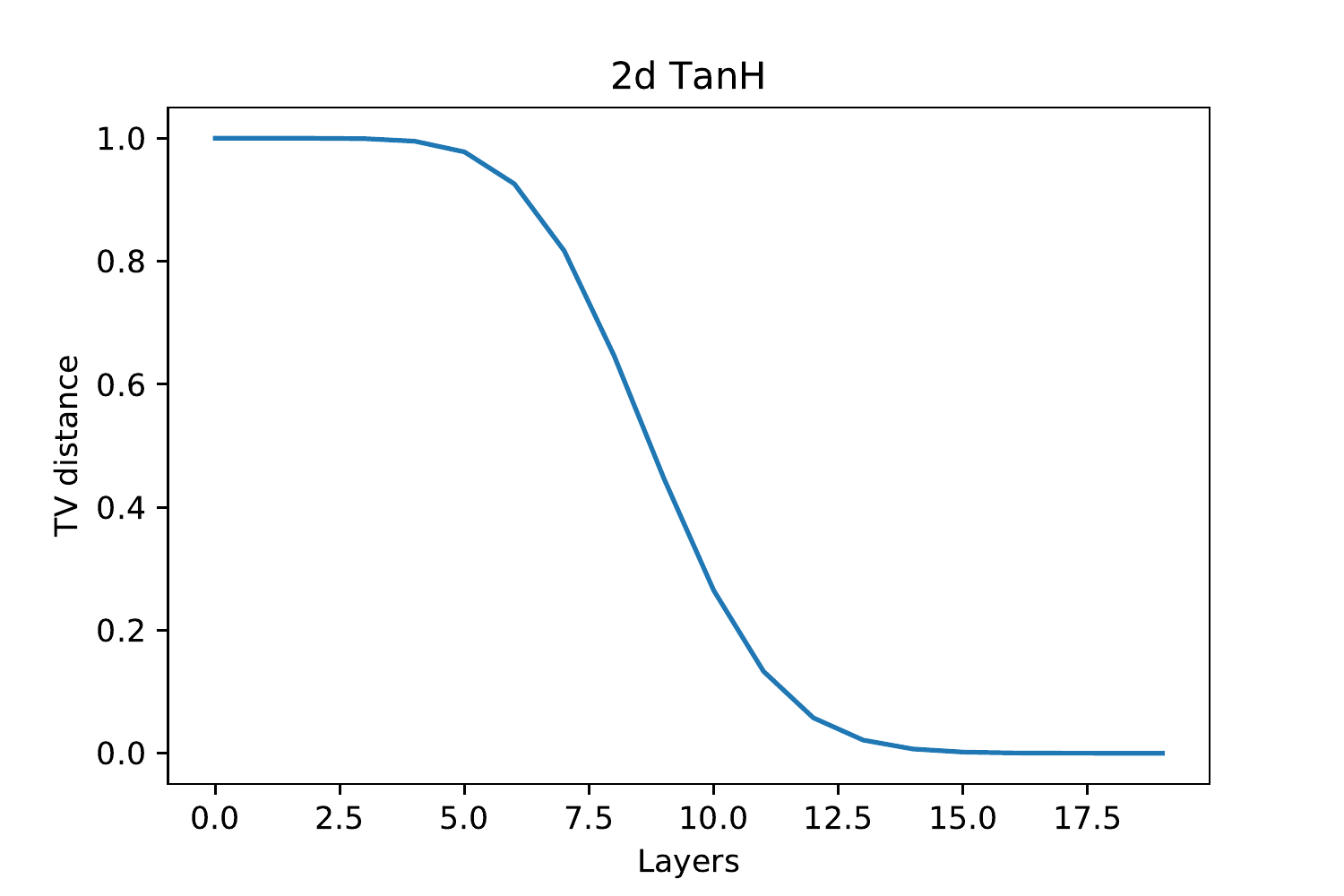}
    \caption{The cut-off phenomenon for neural network mixing with TanH activation. The figure describes the total variation (TV) distance to the equilibrium measure as a function of the number of layers. The leftmost figure is width 1 and the rightmost figure is width 2.}
    \label{fig:threshold:intro}
\end{figure}

In summary, part of the rationale for our study is:
\begin{itemize}
  \item understanding how the function class evolves as the number of layers go to infinity.
  \item understanding if the discrete time-step model leads to a cut-off phenomenon? That is, is there a threshold amount of layers for which the space of output functions of the network increase significantly?
  \item the randomness of our mappings $T_i$ is a way to model a host neural networks, including standard regularization techniques, Bayesian networks, other noise injected models or random initialization.
\end{itemize}

\section{The dynamics of deep neural networks}

Let $X\subseteq\mathbb{R}^{d}$ denote the space which contains the input information
as well as the intermediate data traveling through the
hidden layers, in the notation below, $x_{n}\in X$ for all $n$.
It could be the full vector space, or a subset such as the positive
cone or a unit cube.
Each layer defines a transformation $T:X\rightarrow X$
typically of the form:
\[
    T:x\mapsto\sigma(Wx+b),
\]
where $W$ is a matrix, called \emph{weights}, and $b$ a vector,
called \emph{bias vector}. Note that this is just an example and we can in fact have $T$ being a small network. The \emph{activation function} $\sigma$
is a non-linear function $\sigma:\mathbb{R}\rightarrow\mathbb{R}$
that is fixed for the whole network and is applied to each coordinate.
Two standard choices for $\sigma (t)$ are $\max\{0,t\},$ called the rectified linear
unit (ReLU), and $\tanh(t)$ (TanH). The former has been observed
to often work very well in practice and the latter has the advantage
of being a diffeomorphism $\mathbb{R}\rightarrow(-1,1)$. Other common
 choices are $\min\{1,\max\{0,t\}\}$ and the sigmoid or logistic function $1/(1+e^{-t})$.

As mentioned in the introduction, in deep learning one uses several
layers, sometimes even up to a thousand. We denote these $n$ layer transformations,
$T_{1},...,T_{n}$. In order to gain some theoretical understanding for the role
of the number of layers, the depth, in neural network, we are interested
in what happens to the neural network when $n$ is large, or $n\rightarrow\infty$.
There are now in fact two possible dynamics to look at, first, new
layers are added at the end just before the final output, or second, new
layers are added just after the initial input. For a given initial
input $x_{0}\in X$, these two dynamics correspond to, respectively,
\begin{align} \label{eq:appending:layers}
    x_{n}=T_{n}T_{n-1}...T_{1}x_{0}
\end{align}
and
\begin{align} \label{eq:inserting:layers}
    x_{n}=T_{1}T_{2}...T_{n}x_{0}.
\end{align}
We can view these dynamics as a representation of transfer learning. Where \cref{eq:appending:layers} corresponds to adding a new layer at the end, which is the standard way transfer learning is used. On the other hand \cref{eq:inserting:layers} corresponds to keeping the last layers and inserting a new layer at the beginning, this can be seen as transfer learning in the context of domain adaptation.
Note that if we take the maps randomly, or more precisely independent
and identically distributed (i.i.d.), then for each fixed $n$ and given
$x_{0}$ the distributions of $x_{n}$ are the same. But if we
study the dynamics, the evolution $x_{n}$ of an individual $x_{0}$
then these two dynamics behave differently because of the non-commutativity
of the layer transformations.

In addition to these two ``input dynamics'' we will also consider
``output dynamics''. At the last hidden layer one often has a \emph{decision
function} $f:X\rightarrow Y$, where $Y$ is also a subset of some
vector space. For example, $f$ is in many cases an indicator function of some set or a smooth approximation thereof.

Like the transpose of matrices, or more precisely the adjoint of operators which
transforms linear functionals instead of vectors, we can look at
the effect of the layers on the output function. We can thus let the layers
and dynamics transform the decision function to a function that is then directly applied
to the initial input. In other words, we are pulling back the decision function to the input as it were.
This is the dual dynamics. In formulas,
\[
(T^{*}f)(x):=f(Tx).
\]
Note that orders get reversed just like for the transpose of matrices:
\[
(T_{1}^{*}T_{2}^{*}...T_{n}^{*}f)(x_{0})=f(T_{n}T_{n-1}...T_{1}x_{0})
\]
and corresponding to the second dynamics above:
\[
(T_{n}^{*}T_{n-1}^{*}...T_{1}^{*}f)(x_{0})=f(T_{1}T_{2}...T_{n}x_{0}).
\]

\section{Framework and strategy}

We introduce a geometric viewpoint on neural networks. In several of the most poplar
network models in deep learning, we exhibit an associated metric space on which the
layer maps act as non-expansive maps.

Once we have this one could potentially use the contraction mapping principle, in
the version of a sequence of maps, which composed has a summable contraction constant.
We refer to the review by \cite{DF99} for more information.

But often in our context such strong contraction property is not available, but then
we instead have the non-commutative ergodic theorem
in \cite{GK20}, recalled below as \cref{multergthm}, as a main tool.

In order to probe the neural network to understand a bit better the role of the number of layers, we apply these ergodic theorems to stationary sequences of layer maps. In
experiments we observe a cut-off phenomenon, see \cref{sec:cutoff}.

We will in this paper analyze the following two dynamical systems:
\[
x_{n}=T_{1}T_{2}...T_{n}x_{0}
\]
or
\[
f_{n}:=T_{1}^{*}T_{2}^{*}...T_{n}^{*}f.
\]
These cover the two dynamics above: adding layers at the beginning
or at the end, respectively. It is this order that corresponds to random
walks, where each step $y_{n}$ is not far from $y_{n-1}$. Expressed
differently, these orders will make it possible to  extract some coherent
asymptotic behavior of $x_{n}$ in the first case and $f_{n}$ in
the second. (For certain quantities such as the probability distribution
or the basic growth the order does not matter. This will be seen later).

\section{Ergodic theorems}

To better understand where the tools of ergodic theory come from let us recall the classical law of large numbers (LLN). It asserts that for i.i.d. random variables $X_i$ with $E[|X_i|] < \infty$,
\begin{align*}
	\frac{1}{n}(X_1 + X_2 + \ldots + X_n) \to E[X_1] \quad \text{almost surely.}
\end{align*}

One can wonder if there is a similar law when the random variables are not commuting, for example the product of randomly selected matrices. Notice that in the non-commutative case it is not
obvious how to form an average, and one complication is clear \emph{a priori}: the limits typically will depend on the order of the maps, in contrast to the classical LLN. The results concerning such non-commutative ergodic theorems are of two types, subadditive ergodic theorems and multiplicative ergodic theorems. Let us begin by describing Kingman's subadditive ergodic theorem \cite{Ki} as it is a generalization of the LLN to operations that are subadditive. As a simple special case (the Furstenberg-Kesten theorem) let us consider a sequence of i.i.d. random matrices $A_i \in \R^{N \times N}$ and the functions $a(i,j) =  \log ||A_i \ldots A_{i+j} ||$. This is subadditive in $i$, i.e. $a(1,i+j) \leq a(1,i) + a(i,j)$. The subadditivity follows from the basic norm inequality. In this case, Kingman's subadditive ergodic theorem asserts that
\begin{align*}
	\lim_{n \to \infty} \frac{a(1,n)}{n} \to \ell.
\end{align*}
The limiting value is deterministic like in the LLN but on the other hand there is no good formula for its value.

Multiplicative ergodic theorems are stronger in the sense that they say more about the limit, loosely speaking they give an asymptotic direction of the limit.
The prototypical multiplicative ergodic theorem is Oseledets' theorem \cite{Os}, which relates to random products of matrices.

The setting for these ergodic theorems and for us here is that of integrable ergodic cocycles. This corresponds to
stationary sequences in probability theory language, a special case of which is the i.i.d
with finite first moment setting. Another special case
is the mixing case where one has asymptotic independence.

We will call such
integrable ergodic cocycles simply \emph{stationary sequences}
(with the integrability condition implicitly understood).
Technically it means that we have an underlying probability measure space $(M,\mu)$, $\mu(M)=1$, and
a measurable transformation $L:M\rightarrow M$ that preserves the measure $\mu(L^{-1}A)=\mu(A)$ and is ergodic, which means that every $L$-invariant set has measure
either $0$ or $1$.
Finally we have a measurable map from $M$ to a set of layer maps, $m\mapsto T_m$, so that for the
metric under consideration, all distances involved are measurable and
\begin{align*}
    \int_M d(T_m x,x)d\mu(m) <\infty.
\end{align*}
Note that this condition is independent of
$x$ since each map $T_m$ is non-expansive in the metric.

Consider now a stationary sequence of random matrices $T_i \in \R^{N \times N}$, then Oseledets' theorem states that
\begin{align*}
	\lim_{n \to \infty} \frac{1}{n} \log \|T_1 \ldots T_n x\| = \lambda
\end{align*}
the value of $\lambda$ is not random but depends on $x$, and there is at most $N$ different values, which are called Lyapunov exponents. A good way of understanding the Lyapunov exponents is to consider the special case where all the $T_i$'s are the same matrix, in this case we just taking powers of this matrix and the Lyapunov exponents are just the logarithm of the absolute value of the eigenvalues.

Oseledets' theorem only applies to linear maps (or the derivative cocycle of a diffeomorphsism), but we need to analyze non-linear maps since we have an activation function.
To describe and understand the dynamics of such more general settings we should define something
quantitative, like a norm or a metric. For example, to measure how close
one decision function is to another, or the distance between two information vectors.
These norms and distances should be preserved or at least not increase
when a transformation is applied to any two points. Specifically, Let $d$
denote a metric on either $X$ or some space $L$ of functions $X\rightarrow Y$.
We are interested in metrics that are \emph{semi-invariant}, i.e. metrics that for a given map $U$ satisfies
\begin{align*}
  d(U(z), U(w)) \leq d(z,w).
\end{align*}
for all $z$ and $w$ in $X$ or $L$, respectively. Correspondingly we will call a mapping that has a semi-invariant metric, as a \emph{non-expansive} map with respect to said metric.

We choose layer transformations $T_{k}$ at random (a
stationary sequence).
We let $z_{n}$ denote either of the two random processes,
$$T_{1}T_{2}...T_{n}x_{0}
 \,\,\textrm{or} \,\, T_{1}^{*}T_{2}^{*}...T_{n}^{*}f.$$
Fix\emph{ $d$} a semi-invariant
metric wiht respect to all the layer maps $T_k$. Like in the matrix example above, it is easy to see that $d(z_{0},z_{n})$ is then a subadditive
process, see for example \cite{KL11,GK20} (this is most clearly written
with ergodic theoretic formalism).
Then by Kingman's subadditive ergodic theorem \cite{Ki} we have that
\[
\lim_{n\rightarrow\infty}\frac{1}{n}d(z_{0},z_{n})
\]
exists a.s. This can be viewed as the existence of a basic regularity or growth.
This holds for all the dynamics considered, including the reverse orders.

Multiplicative ergodic theorems refine this convergence, in the way that it predicts
a directional behavior of $z_{n}$ (compare again with the matrix case above). The
precise statement, which in fact generalizes Oseledets' theorem above, is as follows:

\begin{theorem} \label{multergthm} (\cite{GK20})
    For any stationary sequence of maps
    as above, with $z_n$ denoting the orbit, there exists a.s.
    a metric functional $h$ (that is \emph{a priori} random) such that
    \[
        \lim_{n\rightarrow\infty}-\frac{1}{n}h(z_{n})=\lim_{n\rightarrow\infty}\frac{1}{n}d(z_{n},z_{0}).
    \]
\end{theorem}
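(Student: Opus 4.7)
The plan is to combine Kingman's subadditive ergodic theorem, which delivers a scalar escape rate, with a compactness argument in the metric functional compactification, which upgrades that scalar to an asymptotic direction. First I would verify that $a_n := d(z_0, z_n)$ is a subadditive ergodic cocycle: by non-expansiveness of the composition $T_1 \cdots T_i$ together with stationarity, $d(z_i, z_{i+j}) \leq d(z_0, z_j) \circ L^i$, and the triangle inequality then yields $a_{i+j} \leq a_i + a_j \circ L^i$. Kingman together with ergodicity of $L$ produces a deterministic $\ell \geq 0$ with $\tfrac{1}{n} d(z_0, z_n) \to \ell$ almost surely. The case $\ell = 0$ is trivial (any constant functional works), so I restrict to $\ell > 0$.

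Next I would work in the metric functional compactification. For $y \in X$ set $h_y(x) := d(x,y) - d(y, z_0)$; each $h_y$ is $1$-Lipschitz and vanishes at $z_0$, and the pointwise closure of $\{h_y : y \in X\}$ is compact in the topology of pointwise convergence. The candidate $h$ will be a pointwise subsequential limit $h = \lim_j h_{z_{n_j}}$ along well-chosen times $n_j$, and since $-h_{z_{n_j}}(z_k) = d(z_0, z_{n_j}) - d(z_k, z_{n_j})$, the target equality reduces to ensuring that along $n_j$ the lower bound $d(z_0, z_{n_j}) - d(z_k, z_{n_j}) \geq k(\ell - \eps)$ holds for every fixed $k$, while the $1$-Lipschitz property $-h(z_k) \leq d(z_0, z_k)$ already forces the matching upper asymptotic.

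The crux, and the main obstacle, is producing such \emph{efficient times}: for each $\eps > 0$, showing that almost surely there exist infinitely many $n$ at which $d(z_0, z_n) - d(z_k, z_n) \geq k(\ell - \eps)$ holds \emph{simultaneously} for all $0 \leq k \leq n$. This is strictly stronger than what Kingman gives: Kingman controls only the total increment and permits the subadditive cocycle to be far from additive on intermediate scales. The sharpening carried out in \cite{GK20} overcomes exactly this via a careful maximal/covering argument at the level of stationary cocycles; this is the technical heart of the proof and the step one cannot shortcut. Once those times are in hand, a diagonal extraction over $\eps = 1/m$ combined with compactness of the metric functional space yields a single pointwise limit $h$ satisfying $-h(z_n)/n \to \ell$, and a standard measurable-selection argument ensures that $h$ can be chosen measurably in the random parameter. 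The identical scheme applies to the dual orbit $f_n$ acting on the function space $L$, since only non-expansiveness and stationarity enter the argument.
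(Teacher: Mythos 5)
The paper does not prove this theorem at all---it is imported verbatim from \cite{GK20}---so there is no internal proof to compare against; the only place the paper touches the mechanism is in \cref{sub:Thompson-dist}, where it invokes exactly the ``good times'' inequality $p(f_{n_i}-f_n)-p(f_{n_i})\leq -n(\lambda-\delta_n)$ that you identify as the crux. Your outline faithfully reconstructs the Gou\"ezel--Karlsson architecture (Kingman for the rate $\ell$, subsequential limits $h_{z_{n_j}}$ in the compact space of metric functionals, the $1$-Lipschitz upper bound, and the efficient-times lemma for the matching lower bound), and you are right that the simultaneous intermediate-scale control is strictly stronger than Kingman and is the step that must be taken from \cite{GK20}; the only loose end is that your diagonal extraction over $\eps=1/m$ needs the good times for different $\eps$ to be compatible, which is why \cite{GK20} phrases the lemma with a single decreasing sequence $\delta_n\to 0$ rather than a fixed $\eps$.
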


In order to apply this general theorem we need to understand what the metric functionals are for a given metric, see \cref{app:metric:functionals}.

In summary, the strategy we suggest is as follows:
\begin{enumerate}
    \item Given a selected type of layer maps, find a metric space on which the maps act non-expansively.
    \item Determine the metric functionals of this space
    \item Apply the non-commutative ergodic theorem and interpret the result in the given situation.
\end{enumerate}

Moreover, we believe that already the first step, the metric setting, will have other interests
for deep learning, different from the application of ergodic theorems.

\section{Metric spaces} \label{sec:metric}


A metric space is a set with a distance function. In recent decades it has been realized that significant geometrical arguments work in such a general setting even
without any differentiability. The subject is now often called metric geometry and has begun to infiltrate other areas, such as computer science.

Here we now turn to describing some metrics that are relevant in our context. The most basic metric space is the euclidean space of some finite dimension. Here the metric is
given by $d(x,y)=||x-y||$ where the norm comes from a scalar product.

Convex cones in vector spaces admit several useful choices of metrics with
important maps being non-expansive, we refer to the
excellent book \cite{LN12} for full information. We give some special cases of this here.
Given a finite dimensional real vector space, consider the set $X$ of vectors having all its entries positive. Thus $X$ is a generalized first quadrant, a convex cone.
Consider the following expression
\[
	f(x,w)=\log\left(\max_{i}\frac{w(i)}{x(i)}\right)
\]
in terms of the coordinates $x(i)$ and $w(i)$ of the vectors $x,w\in X$. Note that $f$ is asymmetric in its arguments, so in order to build a metric, we need to symmetrize it. It is also clearly not necessarily positive. There are two options, the Thompson metric and the Hilbert projective metric (``projective'' refers to that it is a distance function between lines, while the distance between two proportional vectors are easily seen to be 0. The triangle inequality is not obvious, but we refer again to \cite{LN12} for that.)
The Thompson's metric is defined as:
\[
	d(x,w)=\log \left (\max\left\{ \max_{i}\frac{x(i)}{w(i)},\max_{i}\frac{w(i)}{x(i)}\right\} \right )
\]
in terms of the coordinates $x(i)$ and $w(i)$ of the vectors $x,w\in X$. This makes the cone $X$ into a metric space and its main feature is that any order-preserving, subhomogeneous map of the cone into itself is a non-expansive map in this metric, for more information see \cref{sec:positive}.

The Hilbert metric is instead
\[
	d(x,w)=\log \left( \max_{i}\frac{x(i)}{w(i)} \cdot \max_{i}\frac{w(i)}{x(i)} \right).
\]
This is also obviously a symmetric expression. On the other hand it clearly is $0$ on rays, $d(x,\lambda x)=0$, $\lambda > 0$. More generally if we define the equivalence relation $x \underset{X}{\equiv} y$ if there is a $\lambda > 0$ such that $\lambda x = y$, then $((X,\underset{X}{\equiv}),d)$ is a metric space. Furthermore, order-preserving, homogeneous maps are non-expansive in this metric.

In one possible approach to Oseledets' theorem, one looks at the
associated action of the matrices on the space of positive scalar products on the
underlying vector space, see \cite{KL11} and references therein.
Since our maps are not linear we cannot do the same. We will instead
suggest to look at a much larger space, namely distance functions
either on $\mathbb{R}^{N}$ or the cube. To illustrate these ideas
we fix the cube $X=[-1,1]^{N}$ and use TanH which map the whole vector
space diffeomorphically onto the open cube. Let $M$ be the set of
distance functions on $X$ bi-Lipschitz equivalent to the original
distance defined by a norm $\left\Vert \cdot\right\Vert $. Here is
a metric on $M$:
\[
    D(d_{1},d_{2})=
    \log\left(\max \left \{ \sup_{x\neq y}\frac{d_{2}(x,y)}{d_{1}(x,y)}, \sup_{x\neq y}\frac{d_{1}(x,y)}{d_{2}(x,y)} \right \}\right).
\]
Notice that for two distance functions that are $K$-bi-Lipschitz to
each other, $0<D(d_{1},d_{2})<\infty$. For the optimal $K>1$ their
distance would be $\log K$. This function is clearly symmetric and $D(d_{1},d_{2})=0$
if and only if $d_{1}=d_{2}.$ The triangle inequality is also satisfied
because of the obvious properties of sup and log, like for the Thompson
metric above. So $(M,D)$ is a metric space.

If T are diffeomorphisms then obviously this metric is invariant under
$T^{*}$ which maps $d$ to $T^{*}d(x,y):=d(Tx,Ty)$, since $T$ just permutes the underlying set, leaving the
supremum invariant. Suppose T is
merely injective (otherwise there would occur a division by zero), then $T$ is
non-expansive in view of the inequality:
\[
    \log \left (\sup_{x\neq y}\frac{d_{2}(Tx,Ty)}{d_{1}(Tx,Ty)} \right ) \leq\log \left (\sup_{x\neq y}\frac{d_{2}(x,y)}{d_{1}(x,y)} \right )
\]
since $TX \subset X$.

From this perspective, it seems that injectivity is important, TanH
and invertible matrices gives (non-surjective) isometries of $(M,D)$.
Other activation functions would also be possible, but not ReLU.

In one dimensional dynamics, say in the study of diffeomorphisms $f$ of
a finite interval $I$, one finds the following quantitative measures
of distortion and distance. First out is
\[
\text{Var}(\log(\text{D}\, f))=\int_I \, \left|\frac{\text{D}^{2}\,f}{\text{D}\,f}\right| \, dx
\]
This has subadditive properties under composition of maps, see \cite{Na}. In particular
if one takes a random composition and divides by n, this converges a.e.
to a deterministic value, called the \emph{asymptotic variation}.

Another measure of distortion is
\[
    D(f)=\sup_{x,y\in I}\left|\log\left|\frac{f'(x)}{f'(y)}\right|\right|,
\]
see \cite{DKN07}. $D$ is subadditive with respect to compositions $f^{n}$ and symmetric with respect to $f$ and the inverse $f^{-1}$. We can make this into a metric as follows, \emph{the distortion metric}:
\[
    d(f,g)=\sup_{x,y\in I}\left|\log\left|\frac{g'(x)}{f'(x)}\frac{f'(y)}{g'(y)}\right|\right|
\]
but one can also consider a Thompson version. Furthermore, \cref{multergthm} also holds for asymmetric distances, see \cite{GK20}, this allows us to even consider just half of it, i.e taking away $y$.

We can extend the distortion metric to higher dimension by considering the Jacobians instead of the derivatives. Let us consider diffeomorphisms $f,g: \Omega \to \Omega$ in $\R^N$, then we can consider the Jacobian matrix $J_f = \left \{\frac{\partial f_i}{\partial x_j} \right \}$ and the Jacobian determinant $|J_f|$. Let us define the pseudo-metric
\begin{align*}
    D(f,g) = \sup_{x,y \in \Omega}\left|\log\frac{|J_f(x)|}{|J_g(x)|}\frac{|J_g(y)|}{|J_f(y)|}\right|
\end{align*}
Diffeomorphisms $h:\Omega \to \Omega$ are isometries, i.e.
\begin{align*}
    d(f \circ h, g \circ h)
    &=
    \sup_{x,y \in \Omega}\left|\log\frac{|(J_f \circ h)(x) J_h(x)|}{|(J_g \circ h)(x) J_h(x)|}\frac{|(J_g \circ h)(y) J_h(y)|}{|(J_f \circ h)(y) J_h(y)|}\right|
    \\
    &=
    \sup_{x,y \in \Omega}\left|\log\frac{|(J_f \circ h)(x)||J_h(x)|}{|(J_g \circ h)(x)||J_h(x)|}\frac{|(J_g \circ h)(y)||J_h(y)|}{|(J_f \circ h)(y)||J_h(y)|}\right|
    \\
    &=
    \sup_{x,y \in \Omega}\left|\log\frac{|J_f(x)|}{|J_g(x)|}\frac{|J_g(y)|}{|J_f(y)|}\right| = d(f,g)
\end{align*}
The second to last follows from $\det(AB)=\det(A)\det(B)$ and the last step follows from the fact that $h$ is a diffeomorphism.

In the case where $h: \Omega \to \Omega$ is not a diffeomorphism with non-singular Jacobian, they are non-expansive in the view of
\begin{align*}
d(f \circ h, g \circ h)
&=
    \sup_{x,y \in \Omega}\left|\log\frac{|(J_f \circ h)(x)||J_h(x)|}{|(J_g \circ h)(x)||J_h(x)|}\frac{|(J_g \circ h)(y)||J_h(y)|}{|(J_f \circ h)(y)||J_h(y)|}\right|
    \\
    &\leq
    \sup_{x,y \in \Omega}\left|\log\frac{|J_f(x)|}{|J_g(x)|}\frac{|J_g(y)|}{|J_f(y)|}\right| = d(f,g).
\end{align*}

\section{Main results}

In this section we will use our previously outlined strategy to derive conclusions about the deep limit of neural networks. We will be employing both subadditive and multiplicative ergodic theorems to do so.

\subsection{Positive models} \label{sec:positive}

We begin our exposition into explicit examples, by considering what we call positive models. That is, layers that can only produce positive output. To be specific, let us take for $X$ the cone of vectors in $\R^N$ with all
coordinates $\geq 0$. The layer maps $T(x)=\sigma(Wx+b)$ are such
that $W$ is a matrix with every entry $\geq0$ and same for $b$.
Finally $\sigma$ is an activation function which is increasing and
satisfies $\sigma(\lambda x)\leq\lambda\sigma(x)$ for every $\lambda>0$ and $x \geq 0$, (note that this implies that $\sigma(0)=0$). For example, ReLu, TanH, and the sigmoid.

Note the following properties:
\begin{itemize}
    \item $W$ preserves the cones, i.e. maps $X$ into $X.$ So does $b$ and finally also $\sigma.$ Therefore $T:X\rightarrow X$. (With ReLU this is true without assumptions on $W$ and $b$.)
    \item In fact, more is true, if $x\leq y$ in the partial order defined by the cone (i.e. all components of $x$ are smaller or equal to those of $y)$ then by the positivity of $W$ and $b$ as well as the increasing property of $\sigma$, it holds that $Tx\leq Ty$.
\end{itemize}
Such maps are called \emph{order-preserving}.
\begin{definition}
  A map $f : X \to X$ is called \emph{subhomogeneous} if $\lambda f(x) \leq f(\lambda x)$ for all $x \in X$ and $0 < \lambda  < 1$.
\end{definition}
\begin{example}
  Let us consider the $1$-dimensional case, in this case $a,b,x \in \R$. Let us consider the sigmoid $\sigma(x) = \frac{1}{1+e^{-x}}$ and prove that $\sigma(ax + b)$ is subhomogeneous for any $b > -2$ if $a > 0$.

  First assume that $\lambda \in (0,1]$. Call $f_\lambda (x) = \lambda \sigma(ax+b)$ and $g_\lambda(x) = \sigma(a\lambda x+ b)$.
  We will prove the strict inequality $f_\lambda(x) < g_\lambda(x)$, assume now that for an $x_0$ there is a $\lambda_{x_0} \in (0,1]$ such that $f_{\lambda_{x_0}}(x_0) = g_{\lambda_{x_0}}(x_0)$ then let us differentiate w.r.t $\lambda$ and see
  \begin{align*}
    \frac{d}{d\lambda} f_{\lambda}(x_0) \bigg \lvert_{\lambda = \lambda_{x_0}} &= \sigma(ax_0+b) = g_{1}(x_0) \\
    \frac{d}{d\lambda} g_{\lambda}(x_0) \bigg \lvert_{\lambda = \lambda_{x_0}} &= ax_0 \sigma'(a \lambda_{x_0} x_0 + b) = ax_0 g_{\lambda_{x_0}}(x_0)(1-\sigma(a \lambda_{x_0} x_0+b)) \\
		&= ax_0f_{\lambda_{x_0}}(x_0)(1-\sigma(a\lambda_{x_0} x_0+b)) \\
    &=ax_0 \lambda_{x_0}(1-\sigma(a\lambda_{x_0} x_0+b)) g_1(x_0) < g_1(x_0)
  \end{align*}
  where in the above we have used that
  \begin{align} \label{eq:crit}
    x(1-\sigma(x+b)) < 1
  \end{align}
  which only holds for $b > -2$ and can be proven by straightforward computation.
  Note that this implies that for $\lambda < \lambda_{x_0}$ we have $f_{\lambda}(x_0) < g_\lambda(x_0)$. Now note that when $\lambda = 1$ we have $f(x) = g(x)$ for all $x$, by the above argument the strict inequality carries to all $\lambda \in (0,1)$ for all $x$.
\end{example}

Notice that this example generalizes to positive matrices $a$ and vectors $x$ as long as the activation function is applied component-wise (as usual) and the vector $b$ is greater than $-2$ in each component.

\begin{example}
  Let us again consider the case $\sigma(x) = \frac{1}{1+e^{-x}}$ and prove that given a positive matrix $W$ then for \emph{any} $b$ the mapping $Tx = \sigma(Wx+b)$ is order preserving. By the definition of order preserving we need to prove that given $x \leq y$ we have $Tx \leq Ty$. Even though $Wx+b$ might not be mapped into $X$ if $b$ is not a positive vector, we still have that $(Wy+b)-(Wx+b) \in X$. This together with the monotonicity of $\sigma$ gives that $T$ is order preserving on the positive cone.
\end{example}

Collecting the above, we may thus formulate:

\begin{proposition}
Given a matrix $W$ with positive entries, a vector $b$ with entries larger than $-2$, and $\sigma(x) = \frac{1}{1+e^{-x}}$ the sigmoid activation function. Then the associated layer map  $T(x)=\sigma(Wx+b)$ is non-expansive with respect to the Thompson metric on the standard positive cone.
\end{proposition}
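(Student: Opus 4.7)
The plan is to invoke the general principle recalled in \cref{sec:metric} (following \cite{LN12}): any order-preserving, subhomogeneous self-map of the positive cone $X$ is non-expansive with respect to the Thompson metric. So I need only verify three properties for $T(x)=\sigma(Wx+b)$: (i) $T$ maps $X$ into $X$, (ii) $T$ is order-preserving, and (iii) $T$ is subhomogeneous.

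Property (i) is immediate, since the sigmoid takes values in $(0,1)\subset[0,\infty)$, so each coordinate of $T(x)$ is positive regardless of the sign of $(Wx+b)_i$. Property (ii) is handled exactly as in the second example above: if $x\le y$ componentwise, then the positivity of the entries of $W$ gives $Wx\le Wy$, adding $b$ preserves the inequality, and applying the coordinatewise monotone $\sigma$ yields $T(x)\le T(y)$. Note that no constraint on $b$ is required for (i) or (ii).

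Property (iii) is where the hypothesis $b_i>-2$ is used, and this is the main obstacle. I would reduce the vector statement to the scalar case already worked out in the first example. Fix $\lambda\in(0,1)$, $x\in X$, and look at one coordinate: with $y_i:=\sum_j W_{ij}x_j\ge 0$,
\begin{align*}
\lambda\,T(x)_i = \lambda\,\sigma(y_i+b_i),\qquad T(\lambda x)_i = \sigma(\lambda y_i + b_i).
\end{align*}
Apply the scalar inequality of the first example with the roles of $(a,x,b)$ played by $(1,y_i,b_i)$; since $b_i>-2$ and $y_i\ge 0$, the derivative comparison at any putative crossing point $\lambda_0\in(0,1]$ uses the critical inequality \cref{eq:crit} with $x\ge 0$, and rules out equality for $\lambda<1$. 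Hence $\lambda\,T(x)_i\le T(\lambda x)_i$ for every $i$, which is exactly subhomogeneity.

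Having verified (i)--(iii), the proposition follows directly from the Thompson-metric non-expansiveness of order-preserving subhomogeneous cone maps. The only subtle point worth flagging is the reduction in (iii): one has to notice that even though the scalar example was phrased with a free parameter $a>0$ multiplying $x$, what is actually used coordinatewise is the one-variable inequality $\lambda\sigma(y+c)\le\sigma(\lambda y+c)$ for $y\ge 0$, $c>-2$, $\lambda\in(0,1]$, whose proof (via \cref{eq:crit}) transfers verbatim.
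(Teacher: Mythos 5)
Your argument is correct and is essentially the paper's own proof: the proposition is obtained there by ``collecting'' the first example (subhomogeneity of the componentwise sigmoid for $b>-2$, extended to positive matrices via exactly your coordinatewise reduction with $y_i=\sum_j W_{ij}x_j\ge 0$), the second example (order preservation), and the fact from \cite{LN12} that order-preserving subhomogeneous self-maps of the positive cone are non-expansive in the Thompson metric. There is no substantive difference between your route and the paper's.
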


\begin{example}
  Consider now the ReLU activation function $\sigma(x) = \max\{x,0\}$. We affirm that the mapping $Tx = \sigma(Wx+b)$ is subhomogeneous if $b \geq 0$ and $W$ is arbitrary. To see this: for any $0<\lambda<1$, we have
  \begin{align*}
    \lambda T(x) = \sigma (W \lambda x + \lambda B) \leq \sigma (W \lambda x + B) = T(\lambda x).
  \end{align*}
  And as already remarked, if all entries of $W$ is positive, then $T$ is order-preserving as well.
\end{example}

We thus have some interesting examples of order-preserving and subhomogeneous layer transformations and since these properties are preserved under composition we get a rich bank of examples. As was recalled above, the positive cone admits a metric $d$, the Thompson metric, which is semi-invariant under such maps, i.e.
\[
	d(Tx,Tx')\leq d(x,x')
\]
for all $x,x'\in X$.

In the following theorem we consider the mappings $T: X \rightarrow X$ to be random in a stationary way and quite general, but we will keep the above examples in mind. The point is here that in order for our dynamics to have a well defined limit we consider the ``layers'' added to the beginning of the network instead of at the end, i.e. we are interested in
\begin{align*}
  x_n = T_1 T_2 \ldots, T_n x_0.
\end{align*}
In conclusion, applying \cref{multergthm} we get:
\begin{theorem} \label{thm1}
	Let $X$ be the positive cone in $\R^N$ and let $T_i : X \to X$ be a stationary sequence of maps that is order preserving and subhomogeneous.
	Let $x_n = T_1 T_2 \ldots T_n x_0$, for a fixed $x_0 \in X$. Then
  \[
    \lim_{n\rightarrow\infty} \sup_i |x_n(i)|^{1/n}=e^{\lambda}
  \]
  and there is a (random) coordinate $1\leq i_0 \leq d$ such that
  \[
    \lim_{n\rightarrow\infty}|x_{n}(i_0)|^{1/n}=e^{\lambda}.
  \]
\end{theorem}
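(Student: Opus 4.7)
My plan is to apply Kingman's subadditive ergodic theorem and the multiplicative ergodic theorem \cref{multergthm} with the Thompson metric $d$ on the positive cone $X$. First, I would use the fact recalled in \cref{sec:metric} that every order-preserving and subhomogeneous self-map of $X$ is non-expansive in $d$. Combined with stationarity this makes $n \mapsto d(x_n, x_0)$ a subadditive ergodic cocycle: indeed
\begin{align*}
d(x_{n+m}, x_0) &\leq d(T_1 \cdots T_n T_{n+1} \cdots T_{n+m} x_0,\; T_1 \cdots T_n x_0) + d(x_n, x_0) \\
&\leq d(T_{n+1} \cdots T_{n+m} x_0,\; x_0) + d(x_n, x_0).
\end{align*}
Kingman's theorem therefore produces a deterministic value $\lambda$ with $d(x_n, x_0)/n \to \lambda$ almost surely.

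Next I would translate this into a coordinate statement. Passing to logarithmic coordinates, the Thompson distance becomes the sup-norm
\[
d(x,w) = \max_i \left| \log \frac{x(i)}{w(i)} \right|,
\]
and since $x_0$ has fixed, strictly positive entries this immediately yields $\limsup_n \sup_i x_n(i)^{1/n} \leq e^\lambda$ from the Kingman limit.

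Finally, to produce a coordinate $i_0$ realizing this rate, I would invoke \cref{multergthm}: almost surely there exists a metric functional $h$ with $-h(x_n)/n \to \lambda$. The metric functionals of the Thompson metric on $X$, worked out in \cref{app:metric:functionals}, are Busemann-type limits of normalized distance functions and, after basing at $x_0$, take the form $h(x) = \max_{i \in S}(c_i - \log x(i))$ for some subset $S \subseteq \{1, \ldots, N\}$ and constants $c_i$. Choosing $i_0 \in S$ to be any coordinate attaining the max along a subsequence gives $\log x_n(i_0)/n \to \lambda$, hence $x_n(i_0)^{1/n} \to e^\lambda$, which supplies the matching lower bound $\liminf_n \sup_i x_n(i)^{1/n} \geq e^\lambda$ and closes the argument. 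I expect the main obstacle to lie in this last step: extracting a single coordinate $i_0$ from a potentially mixed-coordinate metric functional, and verifying that the convergence along the chosen subsequence is a genuine limit at $\lambda$ rather than a liminf. This is exactly where the explicit description of the Thompson horofunctions in \cref{app:metric:functionals} has to be used, since the general ergodic theorem only guarantees the existence of some $h$.
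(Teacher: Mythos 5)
Your overall route---the Thompson metric on the cone, non-expansiveness of order-preserving subhomogeneous maps, Kingman's theorem for the rate, and \cref{multergthm} combined with the explicit Thompson horofunctions for the coordinate statement---is exactly the argument the paper intends (the paper gives no further detail than ``applying \cref{multergthm} we get'', deferring the horofunction computation to \cref{app:metric:functionals}). Your subadditivity computation and the upper bound $\limsup_n \sup_i x_n(i)^{1/n}\le e^{\lambda}$ are correct.

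The gap is in the last step, exactly where you suspected it, but it is more serious than a bookkeeping issue. You assert that the limiting metric functional has the form $h(x)=\max_{i\in S}\bigl(c_i-\log x(i)\bigr)$, but the paper's own proposition in \cref{app:metric:functionals} shows that the horofunctions of the \emph{full} Thompson metric are
\[
h_{u,v}(x)=\max\Bigl\{\,\sup_I\bigl(\log x+\log u\bigr),\ \sup_I\bigl(\log(1/x)+\log v\bigr)\Bigr\},\qquad uv=0,\ \sup_I\max\{u,v\}=1,
\]
and only the second branch is of your form. If the functional $h$ delivered by \cref{multergthm} is governed by the first branch, then $-\tfrac1n h(x_n)\to\lambda$ detects a coordinate with $x_n(i)^{1/n}\to e^{-\lambda}$ (exponential collapse), and produces no coordinate realizing $e^{\lambda}$; your lower bound $\liminf_n\sup_i x_n(i)^{1/n}\ge e^{\lambda}$ then does not follow. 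This scenario actually occurs: for the constant (hence stationary) sequence $T_i(x)=x/2$ on the positive half-line, which is order preserving and homogeneous, the symmetric Thompson rate is $\lambda=\log 2$ while $\sup_i x_n(i)^{1/n}\to e^{-\lambda}$. So identifying $\lambda$ with the Kingman rate of the symmetric metric cannot work. The consistent reading, licensed by the paper's remark in \cref{sec:metric} that the ergodic theorems also hold for asymmetric distances, is to run the whole argument with the one-sided quantity $d_2(x,y)=\log\sup_i\bigl(y(i)/x(i)\bigr)$: its rate is by definition $\lim_n\tfrac1n\log\sup_i x_n(i)$ (so the first display of the theorem defines $\lambda$, which may be negative), its metric functionals are exactly $\sup_I\log(v/x)$, and your extraction of a coordinate $i_0$ in the support of $v$ then goes through. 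You would still need to verify the one-sided non-expansiveness of the layer maps for $d_2$, which is where subhomogeneity, in the equivalent form $T(\mu x)\le\mu T(x)$ for $\mu\ge1$, actually enters.
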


The latter statement excludes a certain spiraling inside the cone.

\subsection{Unitary case}

In the case of positive models that are order preserving and subhomogeneous, we get two things, first of all there is essentially exponential growth of the components of the trajectories, secondly we where able to determine the direction of such a trajectory. The subhomogeneity and order preserving transforms allowed us to find a metric which made the maps semi-invariant, if we on the other hand loosen the restrictions and consider general transforms we have the problem of finding good metrics. However if we instead restrict the matrices $W$ to be unitary (spectral norm $1$) then even if we consider fairly general norms we still get a non-expansive mapping. In this case we can also deduce the specific form of the metric functional which gives us a very concrete result.

To describe our situation let us begin by taking $X=\mathbb{R}^{N}$ together with a norm $\|\cdot\|_N$ and layer maps $T(x)=W^T\sigma(Wx+b)$ such that the corresponding operator norm $\left\Vert W\right\Vert_N \leq1$, $b$ general and $\sigma$ which satisfy $\left\|\sigma(x)-\sigma(y)\right\|_N\leq\left\Vert x-y\right\Vert_N $ (i.e. Lipschitz with constant $1$, or non-expansive) when applied to vectors (component-wise as always). The point of having the layer transformations in the form given above, is that it is a popular layer type that is used in for instance ResNets (\cite{HZRS16}) and provides us with a layer that can span the entire vector space. Let us now remark on the $1$-Lipschitz condition of the activation function in our norm. Begin by noting that most used activation functions are $1$-Lipschitz with respect to the standard absolute value, then if we assume that the norm is monotone, i.e. $\|x\|_N \leq \|y\|_N$ if $|x_i| \leq |y_i|$ for all $i$, the activation function becomes $1$-Lipschitz in the norm $\|\cdot \|_N$. For the theorem below we also assume the unit ball in the norm is a strictly convex set. We get by applying \cref{multergthm} (in the form of Corollary 1.5 in \cite{GK20} ):

\begin{theorem}
  Let $(X=\mathbb{R}^{N},\|\cdot\|_N)$ be a normed vector space which has the above monotonicity property and strictly convex unit ball. Consider a stationary sequence of
  layer maps $T_n$ of the form $T(x)=W^T\sigma(Wx+b)$, $\left\Vert W\right\Vert_N \leq1$, $b \in X$, and $\sigma$ is $1$–Lipschitz when applied componentwise in $(X=\mathbb{R}^{N},\|\cdot\|_N)$. Then as $n\rightarrow\infty$ it holds that a.s. there exists a vector $v$ such that
  \[
    \frac{1}{n}T_{1}T_{2}...T_{n}x_{0}\rightarrow v.
  \]
  The vector v is a priori random but independent of the initial data
  $x_{0}$. The norm of $v$ is deterministic.
\end{theorem}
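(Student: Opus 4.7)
The plan is to realize this result as a direct application of \cref{multergthm} on the metric space $(\R^N,d)$ with $d(x,y)=\|x-y\|_N$, combined with a description of the metric functionals that uses strict convexity of the unit ball to extract a direction. First I would verify that every layer $T_n(x)=W^T\sigma(Wx+b)$ is $1$-Lipschitz in $\|\cdot\|_N$: the linear map $W$ is non-expansive by the operator norm hypothesis, the componentwise action of $\sigma$ is non-expansive in $\|\cdot\|_N$ because $\sigma$ is $1$-Lipschitz on $\R$ and the norm is monotone, and $W^T$ is non-expansive for the same reason as $W$ (reading the spectral norm condition symmetrically in $W$ and $W^T$). Composing, each $T_n$ is non-expansive, and the integrability condition on the stationary sequence is immediate, so the orbit $z_n=T_1T_2\cdots T_n x_0$ sits in the framework of \cref{multergthm}.

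The crucial input is then the shape of the metric functionals of $(\R^N,\|\cdot\|_N)$ (see \cref{app:metric:functionals}). Every metric functional ``at infinity'' takes the form $h_\xi(x)=-\xi(x)$ for some $\xi$ in the dual unit sphere of $\|\cdot\|_N$, and strict convexity of the primal unit ball is equivalent to differentiability of the dual norm at each such $\xi$, so $\xi$ attains its dual norm at a unique unit vector $v_\xi\in \R^N$. Applying \cref{multergthm} now yields an a.s.\ metric functional $h$ and a deterministic drift
\[
\ell:=\lim_{n\to\infty}\frac{1}{n}\|z_n-z_0\|_N=\lim_{n\to\infty}-\frac{1}{n}h(z_n),
\]
where determinism of $\ell$ follows from ergodicity together with Kingman's theorem applied to the subadditive process $n\mapsto\|z_n-z_0\|_N$. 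If $\ell=0$ the conclusion is trivial with $v=0$; otherwise $\|z_n\|_N\to\infty$, forcing $h$ to be of the form $-\xi$, which gives both $\xi(z_n)/n\to \ell$ and $\|z_n\|_N/n\to \ell$.

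From here I would argue that any cluster point $w$ of the bounded sequence $z_n/n$ satisfies $\xi(w)=\ell=\|w\|_N$; by strict convexity the supporting functional $\xi$ is attained at the unique direction $v_\xi$, so $w=\ell v_\xi$ and hence $z_n/n\to v:=\ell v_\xi$, with $\|v\|_N=\ell$. Independence of $v$ from $x_0$ is inherited from non-expansiveness: two orbits starting at $x_0,x_0'$ satisfy $\|z_n-z_n'\|_N\leq\|x_0-x_0'\|_N$ by iterated $1$-Lipschitz contraction, so $(z_n-z_n')/n\to 0$ and both orbits produce the same limit. I expect the genuine difficulty to lie in the final upgrade: \cref{multergthm} a priori only delivers the scalar statement $\xi(z_n)/n\to\|\xi\|_*$, and turning this into vector convergence of $z_n/n$ is exactly where strict convexity of the unit ball is indispensable. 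Without strict convexity one could only locate $v$ somewhere on the exposed face of the sphere supported by $\xi$, and the convergence of $z_n/n$ as a vector would fail in general.
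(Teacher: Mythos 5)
Your proposal is correct and follows essentially the same route as the paper, which simply invokes Corollary 1.5 of \cite{GK20} together with the description of metric functionals of a norm (\cref{prop:smooth:norm}); you have in effect reconstructed the proof of that corollary, including the use of strict convexity to upgrade the scalar limit to vector convergence and the non-expansiveness argument for independence of $x_0$. The only imprecisions are minor and shared with the paper itself: for a general monotone norm, $\|W\|_N\le 1$ does not by itself give $\|W^T\|_N\le 1$ (the transpose is non-expansive for the dual norm), and for a merely strictly convex norm the metric functionals at infinity need not be exactly linear --- though the one-sided domination $h(x)\ge-\xi(x)$ by some dual-unit functional $\xi$, which always holds, is all your argument actually uses.
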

The above theorem is a consequence of the same theorem that gave rise to \cref{thm1}, but in this case, since we have a norm we get that the metric functional reduces to a dot-product (\cref{prop:smooth:norm}) and from this fact we can read off the explicit convergence in the above theorem, see \cite{Ka} for more details.

\subsection{An example of the reverse order}

Let $\Omega =[-1,1]^{N}$ and use TanH which map the whole vector
space diffeomorphically onto the open cube. Let $X$ be the set of
distance functions on $\Omega$ bi-Lipschitz equivalent to the original
distance $d_0$ defined by a norm $\left\Vert \cdot\right\Vert $. The metric $D$ on $X$ is:
\[
    D(d_{1},d_{2})=\log\left(\max\left \{ \sup_{x\neq y}\frac{d_{2}(x,y)}{d_{1}(x,y)}, \sup_{x\neq y}\frac{d_{1}(x,y)}{d_{2}(x,y)} \right \}\right).
\]
which was discussed in \cref{sec:metric}, see also \cref{sub:Thompson-dist} for more details about the corresponding metric functionals.

We consider maps $T$ of the usual type except for insisting on that the matrices $W$ are invertible and also having fixed the activation function TanH so that
$T:\Omega \rightarrow \Omega$ and which will then be a diffeomorphism, and as remarked in \cref{sec:metric}, leaving the distance of the metric space $(X,D)$ invariant for the induced action $T^*$.

We consider the reverse dynamics $T_{n}T_{n-1}...T_{0}x_{0}$, and shift the
point of view by instead studying
\[
d_{n}:=T_{1}^{*}T_{2}^{*}...T_{n}^{*}d_0.
\]
In the above we see that the maps are ``inserted'' just before $d_0$, which is again the order which corresponds to random walks.

Thanks to the subadditive ergodic theorem and the other theorems in \cite{GK20}
$T_{1}^{*}T_{2}^{*}...T_{n}^{*}d_0$ will have some regular behavior
when $n\rightarrow\infty$ especially in terms of metric functionals on the space $X$. We have the following result proven in \cref{sub:Thompson-dist}.
\begin{theorem} \label{thm:exponential:expansion}
Under the above assumptions there is a number $\lambda$ so that
	\[
	\lim_{n\rightarrow\infty} \left(\sup_{x\neq y} \frac{\| T_nT_{n-1}...T_1x-T_nT_{n-1}...T_1y \|}
	{\|x-y \|} \right)^{1/n}=e^{\lambda}
	\]
	Moreover, in case $\lambda>0$ there exists a point $x \in \Omega$ and a sequence $z_i = (x_i,y_i) \in
	\{(x,y): x,y \in \Omega, x \neq y\}$ such that $z_i \to (x,x)$ and
	for any $\eps >0$ there is a number $N$ so that for $n>N$
    \begin{align*}
        \frac{\|T_n \ldots T_1 x_i - T_n \ldots T_1 y_i\|}{\|x_i-y_i\|} \geq e^{(\lambda-\eps)n},
    \end{align*}
    for all sufficiently large $i$.
\end{theorem}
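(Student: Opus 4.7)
The plan is to identify the Lipschitz ratio in the theorem with a distance in the asymmetric half of the distortion metric $D$ on the space $X$ of distance functions on $\Omega$ bi-Lipschitz equivalent to $d_0(\cdot,\cdot) = \|\cdot-\cdot\|$. Setting
\[
D^+(d_1, d_2) := \log \sup_{x \neq y} \frac{d_2(x,y)}{d_1(x,y)}
\]
and $d_n := T_1^* T_2^* \cdots T_n^* d_0$, one has $d_n(x,y) = \|T_n \cdots T_1 x - T_n \cdots T_1 y\|$, so the quantity inside the first limit of the theorem is exactly $\exp(D^+(d_0, d_n))$. Because each $T_i$ is a diffeomorphism of $\mathbb{R}^N$ onto $(-1,1)^N$, and hence injective on $\Omega$, the calculation in \cref{sec:metric} shows that $T_i^*$ acts non-expansively on $(X, D^+)$. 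Standard cocycle considerations then make $n \mapsto D^+(d_0, d_n)$ a stationary subadditive process.

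For the first assertion, I would invoke Kingman's subadditive ergodic theorem in the asymmetric form noted in \cref{sec:metric} and established in \cite{GK20} to obtain
\[
\lambda \;=\; \lim_{n \to \infty} \tfrac{1}{n} D^+(d_0, d_n) \quad \text{a.s.,}
\]
with $\lambda$ deterministic by ergodicity. Unwrapping the definition of $D^+$ gives the first displayed identity.

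For the second assertion I would apply \cref{multergthm} to the same orbit $(d_n)$ in $(X, D^+)$. This produces almost surely a metric functional $h$ with $-\tfrac{1}{n} h(d_n) \to \lambda$, which I would combine with the description of the metric functionals of $D^+$ carried out in \cref{sub:Thompson-dist}: each such $h$ is realized via a sequence $(x_i, y_i) \in \Omega \times \Omega$ with $x_i \neq y_i$ converging to a pair $(x^*, y^*) \in \Omega \times \Omega$, essentially as $h(d) = -\lim_i \log (d(x_i, y_i)/d_0(x_i, y_i))$ up to the base-point normalization. Because every $T_n \cdots T_1$ maps $\Omega$ into $(-1,1)^N$, the numerator $d_n(x_i, y_i)$ stays bounded above by the Euclidean diameter of $\Omega$, while $d_0(x_i, y_i) \to d_0(x^*, y^*)$. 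If $x^* \neq y^*$ then $-\tfrac{1}{n} h(d_n) \to 0$, which is incompatible with $\lambda > 0$; hence the defining pair must collapse, $x^* = y^* =: x$, which is precisely the point $x \in \Omega$ and sequence $z_i = (x_i, y_i) \to (x, x)$ of the statement. Unwinding $-\tfrac{1}{n} h(d_n) \to \lambda$ along this witnessing double limit then yields, for every $\varepsilon > 0$, an $N$ such that for each $n > N$ one has $\|T_n \cdots T_1 x_i - T_n \cdots T_1 y_i\|/\|x_i - y_i\| \geq e^{(\lambda - \varepsilon)n}$ for all sufficiently large $i$.

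The principal obstacle is the precise identification of the metric functionals of the asymmetric distortion metric $D^+$ and the verification that $\lambda > 0$ forces the limiting pair to land on the diagonal of $\Omega \times \Omega$; this is exactly what \cref{sub:Thompson-dist} is set up to supply. A secondary technicality is passing from the single abstract double limit produced by \cref{multergthm} to the quantifier pattern $\forall n > N, \forall i \gg 1$ appearing in the conclusion, but this reduces to a routine $\varepsilon/2$ argument once the shape of the functional is in hand.
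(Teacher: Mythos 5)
Your overall route is the paper's: identify the Lipschitz ratio with the asymmetric half $p(f_n)=\log\sup_{x\neq y} d_n(x,y)/\|x-y\|$ of the Thompson-type metric on distance functions, get the first limit from the subadditive ergodic theorem for this hemi-metric, and for the second assertion run a dichotomy on whether the limiting pair of witness points lands on or off the diagonal, with boundedness of $\Omega$ killing the off-diagonal case when $\lambda>0$. Two points, however, are weaker than you present them. First, you assert that every limiting metric functional $h$ of this space is \emph{realized} as $h(d)=-\lim_i\log\bigl(d(x_i,y_i)/d_0(x_i,y_i)\bigr)$ along a sequence of off-diagonal points; the paper neither proves nor needs such an exact representation (metric functionals of $L^\infty$-type spaces are notoriously delicate, as the paper itself remarks). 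What it actually establishes, via the hemi-norm argument of \cite{GV}, is only the one-sided inequality $p(f-g)-p(f)\geq e_{\hat z}(-g)-\delta$ for a well-chosen off-diagonal point $\hat z$ independent of $g$, hence in the limit $h(g)\geq -e_z(g)$; this inequality suffices for both branches of the dichotomy, and your stronger claim would need a separate justification that \cref{sub:Thompson-dist} does not supply. Second, the quantifier pattern ``for all $n>N$ and all sufficiently large $i$'' does not follow from the bare limit $-\tfrac1n h(f_n)\to\lambda$ of \cref{multergthm} by an $\eps/2$ argument alone: the paper needs the refined statement of \cite[Theorem~1.1]{GK20}, which for a prescribed sequence $\delta_n\downarrow 0$ produces times $n_i\to\infty$ with $p(f_{n_i}-f_n)-p(f_{n_i})\leq -n(\lambda-\delta_n)$ simultaneously for \emph{all} $n<n_i$; sandwiching this against $-e_{z_i}(f_n)-\delta_{n_i}$ is what yields $d_n(x_i,y_i)\geq\|x_i-y_i\|e^{n(\lambda-\eps)}$ with the stated uniformity. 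So the architecture of your argument is right, but the two steps you flag as ``obstacles'' are exactly where the substance lies, and the tools you name (\cref{multergthm} plus an exact horofunction formula) are not quite the ones that close them.
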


The second assertion means in words that there is a random point $x\in \Omega $ and points approaching $x$ which the maps $T_nT_{n-1}...T_1$ separate with maximal exponential rate.

\subsection{Dynamics of decision functions}

Let $\Omega\subset \mathbb{R}^N$ be a compact set. We consider the dynamics $T_{n}T_{n-1}...T_{1}x$ for $x\in\Omega$ but shift the
point of view to study instead
\[
f_{n}:=T_{1}^{*}T_{2}^{*}...T_{n}^{*}f,
\]
where $f$ is the original decision function defined on $\Omega$. There are several possibilities here, especially with TanH and $\Omega=[-1,1]^N$, but we keep it simple and general to illustrate what can be shown. We assume that $f$ and all layer maps
are diffeomorphisms $\Omega\rightarrow \Omega$.

The maps are chosen in a stationary way as before. They preserve our Jacobi distortion metric $D$, from \cref{sec:metric}. It is then a standard fact that
$d(f,f_n)$ is a subadditive process. The subadditive ergodic theorem applies and gives since $J_f$ is bounded and bounded away from $0$:
\begin{theorem}
	In this situation there is a well-defined
	exponential growth rate $\lambda$ of the distortion of the decision functions $f_n(x)=f(T_nT_{n-1}...T_1x)$, more precisely,
	\[
	\lim_{n\rightarrow\infty} \sup_{x,y} |J_{f_n}(x)J_{f_n}(y)^{-1} |^{1/n}=e^{\lambda}.
	\]
\end{theorem}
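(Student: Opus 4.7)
The plan is to identify the quantity in the theorem as the exponential of the Jacobi distortion distance from $f_n$ to the identity, and then extract its rate via Kingman's subadditive ergodic theorem applied to the cocycle $a_n := D(f, f_n)$. Since $J_{\mathrm{id}} \equiv 1$, the definition of the pseudo-metric $D$ from Section 5 gives
\[
D(f_n, \mathrm{id}) \;=\; \sup_{x,y}\left|\log\frac{|J_{f_n}(x)|}{|J_{f_n}(y)|}\right| \;=\; \log \sup_{x,y}\bigl|J_{f_n}(x)\, J_{f_n}(y)^{-1}\bigr|,
\]
so the assertion is equivalent to $\tfrac{1}{n} D(f_n, \mathrm{id}) \to \lambda$ almost surely.

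Next I would verify that $a_n$ is a subadditive cocycle over the ergodic shift. Writing $f_n = f \circ \Phi_n$ with $\Phi_n = T_n \cdots T_1$, and decomposing $\Phi_{n+m} = \Psi \circ \Phi_n$ where $\Psi = T_{n+m} \cdots T_{n+1}$ depends only on the shifted sequence, the isometry property from Section 5 (that right-composition by a diffeomorphism of $\Omega$ leaves $D$ invariant) yields
\[
D(f_n, f_{n+m}) \;=\; D(f \circ \Phi_n,\, f \circ \Psi \circ \Phi_n) \;=\; D(f,\, f \circ \Psi),
\]
which, under the ergodic shift generating the stationary sequence, has the same distribution as $a_m$. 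Combined with the triangle inequality $D(f, f_{n+m}) \leq D(f, f_n) + D(f_n, f_{n+m})$, this is exactly the subadditive cocycle relation $a_{n+m}(\omega) \leq a_n(\omega) + a_m(L^n\omega)$. Integrability of $a_1 = D(f, f \circ T_1)$ follows from the hypothesis that $|J_f|$ and $|J_{T_1}|$ are bounded away from $0$ and $\infty$ on the compact $\Omega$, so Kingman's subadditive ergodic theorem produces a deterministic $\lambda \geq 0$ with $\tfrac{1}{n} a_n \to \lambda$ a.s.

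To close the loop, the triangle inequality gives $|D(f_n, \mathrm{id}) - D(f, f_n)| \leq D(f, \mathrm{id})$, a fixed finite constant, so dividing by $n$ forces $\tfrac{1}{n} D(f_n, \mathrm{id})$ to share the limit $\lambda$; exponentiating then yields the stated identity. The only delicate point in the plan is making sure the Section 5 isometry is invoked on the correct side: $f_n$ is built by inserting the $T_i$'s between $f$ and the initial point, so they act by right-composition on $f$, which is precisely the side on which $D$ is preserved. Everything else is a direct instantiation of Kingman's theorem; no multiplicative ergodic refinement is needed for this particular statement.
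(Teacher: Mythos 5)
Your proposal is correct and follows essentially the same route as the paper, which (in the paragraph preceding the theorem) simply notes that the layer maps act isometrically on the Jacobi distortion metric $D$, that $D(f,f_n)$ is therefore a subadditive process to which Kingman's theorem applies, and that the boundedness of $J_f$ away from $0$ and $\infty$ lets one pass to the quantity in the statement. You have merely made explicit the steps the paper leaves as ``standard facts'': the identification $\sup_{x,y}\lvert J_{f_n}(x)J_{f_n}(y)^{-1}\rvert = e^{D(f_n,\mathrm{id})}$, the cocycle decomposition via right-composition, and the triangle-inequality comparison $\lvert D(f_n,\mathrm{id})-D(f,f_n)\rvert\leq D(f,\mathrm{id})$.
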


Via the theory of random dynamical systems as exposed in \cite{Ar}, there
is an approach to the last theorem via Oseledets' theorem, see also \cite{Li}
for comparison.

\section{Short time (layer) behavior for random weight initialization}
\label{sec:cutoff}
When training deep neural networks an important aspect is how to do the random weight initialization such that we get a network that can actually be trained. In this section we will explore this concept a bit as it relates to our random layer transformations in this paper. However, in contrast to the previous theory we will study the ``short time'' behavior.

We will consider neural networks of the following simple type
\begin{align*}
    X_{i+1} = \sigma(W_i \cdot X_i), \quad i=0,\ldots
\end{align*}
where $W_i$ is i.i.d. from some distribution and $X_0$ is some starting point of the network. If we consider this as a Markov chain, i.e. for a fixed point we would take random weights at each step and consider the distribution of the output for a fixed $X_0$.

We will view this from the context of mixing and as such we need to frame our discussion with some terminology.

Consider a Markov chain on a domain of size $n$ (could be a graph or group for instance) and let $P_n^t(x, \cdot)$ be the distribution of the Markov chain started at $x$ at time $t$. Suppose that the Markov chain has a stationary distribution $\pi_n$. It is well known that if the Markov chain is irreducible and aperiodic we get exponential convergence, i.e. we get
\begin{align*}
    \max_{x} \|P_n^t(x,\cdot)-\pi_n\|_{TV} \leq e^{-ct}
\end{align*}
for some constant $c > 0$. Here $\| \cdot \|_{TV}$ is the total variation (TV) distance, which is defined for measures $\mu,\nu$ on $\Omega$ (finite state-space $\Omega$)
\begin{align*}
    \|\mu - \nu \|_{TV} = \sum_{x \in \Omega} |\mu(x) - \nu(y)|.
\end{align*}
If we define
\begin{align*}
    d_n(t) := \max_{x} \|P_n^t(x,\cdot)-\pi_n\|_{TV},
\end{align*}
then we say that the Markov chain exhibits a cut-off at $t_n$ with window $w_n$ if $w_n = o(t_n)$, and
\begin{align*}
    \lim_{\alpha \to \infty} \liminf_{n \to \infty} d_n(t_n - \alpha w_n) &= 1, \\
    \lim_{\alpha \to \infty} \limsup_{n \to \infty} d_n(t_n + \alpha w_n) &= 0.
\end{align*}
Intuitively speaking, this means that $d_n$ is close to $1$ for times just below $t_n$ and $d_n$ is close to 0 for times slightly larger than $t_n$, at least for large values of $n$. Note: the mixing time for the Markov chain will be close to $t_n$ for large $n$.
The mixing time for a Markov chain is defined as
\begin{align*}
    t_{mix}(\epsilon) &:= \min\{t: d(t) \leq \epsilon \} \\
    t_{mix} &:= t_{mix}(1/4).
\end{align*}
The cut-off phenomenon is stronger than the concept of fast mixing as it is a double sided property.

\subsection{Neural network induced Markov chains}

In the examples that we will simulate below, the limiting distribution is actually the point-mass at 0, and to make the total variation distance easier to define we work with finite precision, which makes the state-space finite. Note that if we have two measures $\mu$ and $\nu$ on a finite set $\Omega$ then,
\begin{align*}
    \|\mu - \nu \|_{TV} = \sum_{x \in \Omega} |\mu(x) - \nu(y)|
\end{align*}
i.e. the total variation distance is equivalent to the $L^1$ distance of the densities, which is easier to compute. We now come to our first example, namely a fully connected neural network with TanH activation and revisit the study done in \cite{Xavier}.

Consider the following simple Markov chain of neural network type (with heuristic initialization, see \cite{Xavier})
\begin{align} \label{eq:Markov}
    X_{i+1} = \tanh(W_i \cdot X_i), \quad i=0,\ldots
\end{align}
where $X_0 \neq 0, X_0 \in \R^N$, where $W_i$ are i.i.d. $W_i \sim \text{unif}\left (\left [-\frac{1}{\sqrt{N}},\frac{1}{\sqrt{N}} \right ]^N \right )$, and the TanH is applied componentwise (as is customary).

Above we can think of $N$ as the ``size'' of the Markov chain in the sense above. The result of the simulation can be found in \cref{fig:threshold}, where we worked with a finite precision of $0.001$ and measure the total variation distance to the point-mass at $0$. With the heuristic scaling factor introduced i.e. that $W_i \sim \text{unif}\left (\left [-\frac{1}{\sqrt{N}},\frac{1}{\sqrt{N}} \right ]^N \right )$ we see that they exhibit a cut-off at pretty much the same level. The cut-off implies that the behavior of this random initialization is markedly different for a layer count of around $3$ to layer counts above $10$.

It seems that this phenomenon occurs even for asymmetric activation functions, like the Sigmoid-weighted Linear Unit (SiLU or Swish) (\cite{Swish1,Swish2})
\begin{align} \label{eq:smooth:relu}
    \sigma(x) = \frac{x}{1+e^{-x}}
\end{align}
with the same setup as in \cref{eq:Markov} with the above activation, see \cref{fig:threshold2}. It does not however seem to occur for non-smooth activations, like ReLU.

\begin{figure}
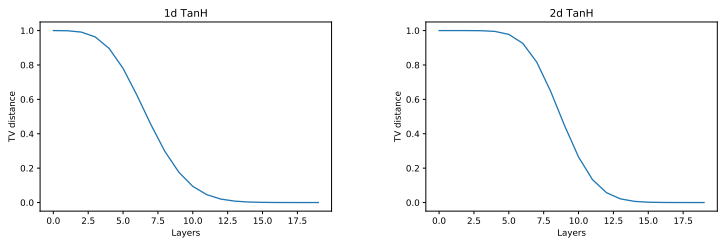

    \centering
    \includegraphics[width=5cm]{1dMixing.pdf}
    \includegraphics[width=5cm]{2dMixing.pdf}
    \caption{The cut-off phenomenon for Neural network mixing. The leftmost figure is width 1 and the rightmost figure is width 2. Mixing times are 9 and 11 respectively.}
    \label{fig:threshold}
\end{figure}

\begin{figure}
    \centering
    \includegraphics[width=5cm]{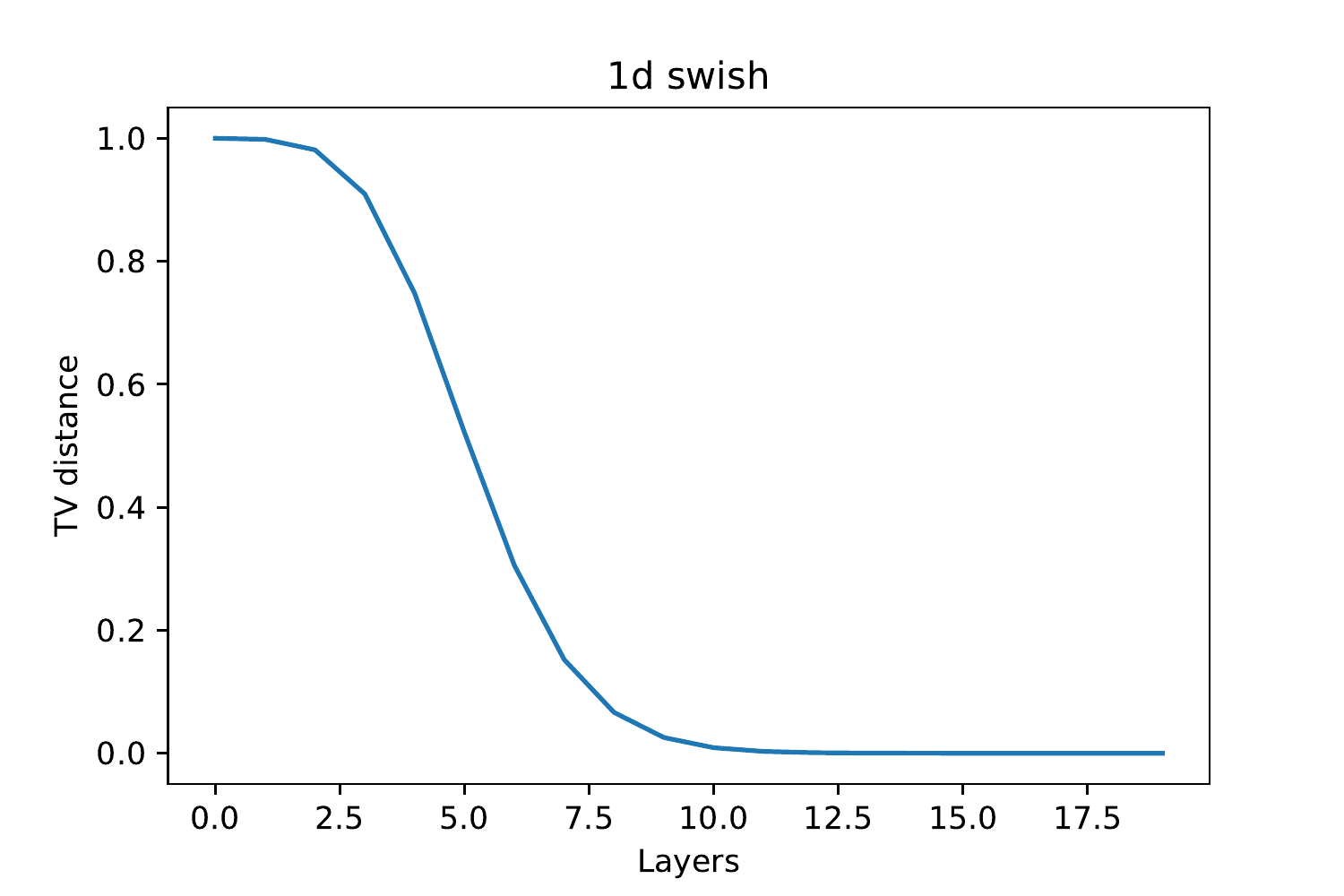}
    \caption{The cut-off phenomenon for Neural network mixing. The figure corresponds to activation function \cref{eq:smooth:relu}. Mixing time is 7.}
    \label{fig:threshold2}
\end{figure}

\section{Conclusion and outlook}

We have shown that some aspect of the understanding of deep learning
networks have a very natural dynamical interpretation, where recent
ergodic theorems can be applied. Indeed the tools are quite general
and therefore there is a wealth of possibilities. In other mathematical
contexts this versatility has already been demonstrated.

In the context of deep learning, one should translate the meaning
of the metrics and their functionals (say in terms of notions of complexity). This we have achieved for several choices of metrics.

A question that arises from the cut-off phenomenon that we demonstrate experimentally is its relevance for training of the network, that is, what difference it makes in practice from choosing fewer layers than the cut-off depth vis-à-vis choosing more layers. In fact, the fast convergence towards the point mass at $0$ hints that the last layers in a deep network will have an activation close to $0$, which is the linear regime of the activation function, implying that the deeper layers are basically linear mappings.
Furthermore, in \cite{Xavier} they considered instead the normalization which in our case becomes $\frac{\sqrt{3}}{\sqrt{N}}$ (as the input is the same as the output dimension) which actually only delays the cut-off to higher layer counts, it is however still there. One could speculate that the variance of the initialization can as such be used to control how nonlinear the initialized neural network is.

We believe that the metric setting will in forthcoming work have other interests
in deep learning, not only from the application of the non-commutative ergodic theorem. This could ultimately inform the choice of best design of the neural network for a given practical task.

\newpage
\appendix
\section{Metric functionals} \label{app:metric:functionals}

The mathematical abstraction of distance is that of a metric space $X$ which is a set with a distance function $d(x,y)$ that is symmetric, positive
and zero if and only if $x=y$. Moreover there is the fundamental triangle inequality: the distance between $x$ and $y$ cannot be larger than the sum of the distances from
$x$ to $z$ and from $z$ to $y$, for any point $z$. Sometimes it is useful and natural to relax the condition of symmetry.

We will now define the metric space analogs of linear functionals, affine hyperplanes and half spaces.
Let $(X,d)$ denote a metric space, fix $x_0 \in X$. Let $F(X,\mathbb{R})$ be the space of continuous functions $X\rightarrow \mathbb{R}$ equipped with the topology of
pointwise convergence. We define the continuous injection
	\begin{align*}
	  &\Phi : X \hookrightarrow F(X,\R) \\
	  &x \mapsto h_x(\cdot) := d(\cdot, x) - d(x_0,x)
	\end{align*}
The functions $h_x$ are all non-expansive with respect to $d$ and vanish at $x_0$. The image $\Phi(X)$ can be identified with a subset of a product of compact intervals, which is compact by Tychonoff's theorem. The closure of the image $\overline{\Phi(X)}$ will therefore be compact (similar to the compactness in the weak topology of functional analysis). See for example
\cite{GV} or \cite{Ka} for details. We will call the elements in this compact space \emph{metric functionals}. In particular, to each point $x$ there is the corresponding metric functional $h_x$. For metric functionals that are genuine limits, their level-sets are called horospheres, and sublevel sets are called horoballs. These two concepts are the metric analogs of affine hyperplanes and half-spaces in the linear vector space setting.

The first metric space to look at is the the finite dimensional euclidean space, where the metric is
given by $d(x,y)=||x-y||$ where the norm comes from a scalar product.
 In this case the metric functionals are up to a constant: the distance to a point in $X$ (in which case sublevel sets are balls) or linear functionals of norm 1 (in which case sublevel sets are half-spaces). Horospheres and horoballs are affine hyperplanes and half-spaces respectively.

\subsection{Metric functionals in the case of smooth norms}
Below we provide a characterization of the metric functionals in case of norms. For another proof for $p$-norms see \cite{Gutierrez}.
\begin{proposition}
    \label{prop:smooth:norm}
    Let $\| \cdot \|_a$ be a norm on $\R^N$ that is $C^2$ as a function on the corresponding unit sphere. Consider the function
    \begin{align*}
    	h_{y^n}(x) = \|y^n-x\|_a-\|y^n\|_a
    \end{align*}
    where $\|y^n\|_a \to \infty$ as $n\rightarrow \infty$. Then there is a subsequence and a vector $w$ with $\|w\|_a = 1$ such that
    \begin{align*}
	    \lim_{n \to \infty} h_{y^n}(x) = -x \cdot \left (\nabla \| \cdot \|_a \bigg \rvert_{w} \right ).
    \end{align*}
\end{proposition}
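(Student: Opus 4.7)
The plan is to reduce the statement to a Taylor expansion of the norm on its unit sphere. First I would factor out the growing magnitude by writing $y^n = r_n u_n$, where $r_n := \|y^n\|_a \to \infty$ and $u_n := y^n/r_n$ lies on the unit sphere $S_a := \{u : \|u\|_a = 1\}$. Since the unit sphere of a norm on $\mathbb{R}^N$ is compact (finite-dimensional), we may pass to a subsequence (not relabeled) so that $u_n \to w$ for some $w \in S_a$. This will be the $w$ in the conclusion.

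Next I would rewrite the expression as
\[
h_{y^n}(x) = \|y^n - x\|_a - \|y^n\|_a = r_n\left(\left\|u_n - \tfrac{x}{r_n}\right\|_a - 1\right),
\]
using the positive 1-homogeneity of the norm. Here the point is that the $C^2$ assumption on the unit sphere, combined with positive homogeneity, upgrades to $C^2$ regularity of $\|\cdot\|_a$ on all of $\mathbb{R}^N \setminus \{0\}$; indeed the explicit formula $\|z\|_a = \|z\| \cdot \|z/\|z\|\|_a$ shows that smoothness on $S_a$ transfers to any punctured neighbourhood of the origin. Write $N(z) := \|z\|_a$ for brevity; then $N$ is $C^2$ on $\mathbb{R}^N \setminus \{0\}$ and its derivatives are bounded on any compact set disjoint from the origin.

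Now for $n$ large enough that $r_n > 2\|x\|$ (in any background norm), the segment from $u_n$ to $u_n - x/r_n$ stays in a compact annulus around $S_a$ that is bounded away from the origin. A second-order Taylor expansion of $N$ at $u_n$ in the direction $-x/r_n$ gives
\[
N\!\left(u_n - \tfrac{x}{r_n}\right) = N(u_n) - \tfrac{1}{r_n}\,\nabla N(u_n) \cdot x + O\!\left(\tfrac{1}{r_n^{2}}\right),
\]
with an error constant uniform in $n$ because the $C^2$ norm of $N$ is uniformly bounded on the relevant annulus. Multiplying by $r_n$ and using $N(u_n) = 1$, we obtain
\[
h_{y^n}(x) = -\,x \cdot \nabla N(u_n) + O\!\left(\tfrac{1}{r_n}\right).
\]

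Finally, since $u_n \to w$ and $\nabla N$ is continuous on the sphere by the $C^2$ assumption, $\nabla N(u_n) \to \nabla N(w) = \nabla\|\cdot\|_a\big|_w$, yielding the claimed limit
\[
\lim_{n\to\infty} h_{y^n}(x) = -\,x \cdot \nabla\|\cdot\|_a\big|_w.
\]
The only real subtlety is the justification that the Taylor remainder is uniform in $n$; this is handled by the observation that $C^2$ regularity on $S_a$ extends by homogeneity to a punctured neighbourhood, so the second derivatives of $N$ are bounded on the shrinking perturbations $u_n - tx$ for $t \in [0, 1/r_n]$ once $n$ is large.
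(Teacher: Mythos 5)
Your proof is correct and takes essentially the same route as the paper: normalize $y^n = r_n u_n$ with $u_n$ on the unit sphere, extract a convergent subsequence $u_n \to w$ by compactness, rewrite $h_{y^n}(x)$ as a difference quotient, and Taylor-expand the norm to first order. If anything, your treatment of the remainder (uniform bound on the second derivatives over a compact annulus around the sphere, then continuity of $\nabla\|\cdot\|_a$ to pass from $\nabla N(u_n)$ to $\nabla N(w)$) is slightly more careful than the paper's, which expands around $w^n$ while already evaluating the gradient at $w$ and appeals to a diagonal argument to absorb the discrepancy.
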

\begin{proof}
    Begin by first noting that $w^n = y^n/\|y^n\|_a$ is a vector on the unit sphere and as such there is a subsequence of $\{w^n\}$ converging to a vector $w$ s.t. $\|w\|_a = 1$. We will dispense with notation for subsequences for simplicity. Let us rewrite
    \begin{align*}
    	h_{y^n}(x) = \|y^n-x\|_a-\|y^n\|_a = \frac{\left \|w^n - \frac{x}{\|y^n\|_a}\right \|_a-\|w^n\|_a}{1/\|y^n\|_a}
    \end{align*}
    Taylor expanding the norm $\| \cdot \|_a$ around $w^n$
    \begin{align*}
    	\| z \|_a = \|w^n\|_a + (z-w^n) \cdot \nabla \| \cdot \|_a \rvert_{w} + f_n(z)\|z-w^n\|_2
    \end{align*}
    where $f_n(z) \to 0$ as $z \to w^n$, and $\|\cdot\|_2$ is the standard Euclidean norm. For simplicity let us relabel our sequence such that $\|y^n\|_a = n$, then
    \begin{align*}
    	\frac{\left \|w^n - \frac{x}{n}\right \|_a-\|w^n\|_a}{1/n} &= -x \cdot \left (\nabla \| \cdot \|_a \bigg \rvert_{w} \right ) + \frac{f_n(x/n)\|x/n\|_2}{1/n} \\
    	&=-x \cdot \left (\nabla \| \cdot \|_a \bigg \rvert_{w} \right ) + f_n(x/n)\|x\|_2 \\
    \end{align*}
    Together with a diagonal argument it is now clear that the $C^2$ regularity of $\| \cdot \|_a$ gives us the limit as $n \to \infty$ (for a subsequence)
    \begin{align*}
    	\lim_{n \to \infty} h_{y^n}(x) = -x \cdot \left (\nabla \| \cdot \|_a \bigg \rvert_{w} \right ).
    \end{align*}
\end{proof}

\subsection{Metric functionals for the Thompson metric}

In the following we make the identification of a vector $x \in \R_+^m$ with a positive function $x:I=\{1,\ldots,m\} \to \R_+$. For example, $\mathbbm{1}(i)=1$ for all $i$.
Consider the following
\begin{align*}
    d_1(x,y) = \log \sup_I \left ( \frac{x}{y} \right )
\end{align*}
we wish to derive the limit of
\begin{align*}
    d_1(x,y_n) - d_1(\mathbbm{1},y_n)
\end{align*}
where $d(\mathbbm{1},y_n) \to \infty$ as $n \to \infty$.

Consider the following
\begin{align*}
    d_1(x,y_n) - d_1(\mathbbm{1},y_n) &= \sup_I \left ( \log(x)+\log(\mathbbm{1}/y_n) - \sup_I \log(\mathbbm{1}/y_n) \right ) \\
    &=\sup_I \left ( \log(x)+\log\left (\frac{\mathbbm{1}/y_n}{e^{\sup_I \log(\mathbbm{1}/y_n)}} \right ) \right )
\end{align*}
the function
\begin{align*}
    u_n = \frac{\mathbbm{1}/y_n}{e^{\sup_I \log(\mathbbm{1}/y_n)}}
\end{align*}
satisfies $ 0 < u_n \leq 1$ and $\sup_I u_n = 1$. Therefore there is a subsequence of $u_n$ that converges (since $I$ is finite) to $u$ and we have that the same subsequence satisfies
\begin{align*}
    d_1(x,y_n) - d_1(\mathbbm{1},y_n) \to \sup_I \left ( \log(x)+\log (u) \right )
\end{align*}

The other part of the Thompson metric satisfies a similar relation, i.e.
let
\begin{align*}
    d_2(x,y) = \log \sup_I \left ( \frac{y}{x} \right )
\end{align*}
we wish to derive the limit of
\begin{align*}
    d_2(x,y_n) - d_2(\mathbbm{1},y_n)
\end{align*}
where $d(\mathbbm{1},y_n) \to \infty$ as $n \to \infty$.

Consider the following
\begin{align*}
    d_2(x,y_n) - d_2(\mathbbm{1},y_n) =\sup_I \left ( \log(1/x)+\log\left (\frac{y_n}{e^{\sup_I \log(y_n)}} \right ) \right )
\end{align*}
the function
\begin{align*}
    v_n = \frac{y_n}{e^{\sup_I \log(y_n)}}
\end{align*}
satisfies $ 0 < v_n \leq 1$ and $\sup_I v_n = 1$. Therefore there is a subsequence of $v_n$ that converges to $v$ (since $I$ is finite) and we have that the same subsequence satisfies
\begin{align*}
    d_2(x,y_n) - d_2(\mathbbm{1},y_n) \to \sup_I \left ( \log(1/x)+\log (v) \right )
\end{align*}

This discussion can be compared with a similar one in \cite{Gutierrez}. We summarize everything in the following proposition:
\begin{proposition}
    The metric functionals of the either half of the (or the full metric) Thompson metric that arise as limits (also called horofunctions) are given as follows:
    For $d_1$ we get that there exists a non-zero function $u:I \to [0,1]$ such that
    \begin{align*}
        h_{u}(x) = \sup_{I} \log(x u)
    \end{align*}
    and $\sup_I u = 1$.
    For $d_2$ we get the existence of a non-zero function $v: I \to [0,1]$ such that
    \begin{align*}
        h_{v}(x) = \sup_{I} \log(v/x)
    \end{align*}
    and $\sup_I v = 1$.
    In the case of $d = \max\{d_1,d_2\}$ then we have
    \begin{align*}
        h_{u,v}(x)=\max \{\sup_I \left ( \log(x)+\log (u) \right ),  \sup_I \left ( \log(1/x)+\log (v) \right \},
    \end{align*}
    where $uv = 0$ and $\sup_I \max\{u,v\} = 1$.
\end{proposition}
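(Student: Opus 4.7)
The two halves of the proposition concerning $d_1$ and $d_2$ separately follow by packaging the computation carried out in the discussion preceding the statement into a compactness argument. Set
\[
u_n^{(1)} := \frac{\mathbbm{1}/y_n}{\exp(\sup_I \log(\mathbbm{1}/y_n))}, \qquad v_n^{(2)} := \frac{y_n}{\exp(\sup_I \log y_n)}.
\]
Each lives in the compact cube $[0,1]^I$, and each has supremum equal to $1$, so by passing to a subsequence one gets componentwise convergence $u_n^{(1)} \to u$ and $v_n^{(2)} \to v$. Since $I$ is finite the normalization $\sup u = \sup v = 1$ is preserved, and the $\sup_I$ inside the $\log$ passes to the limit (components where $u(i)=0$ or $v(i)=0$ contribute $-\infty$, harmless since the other components keep the sup finite). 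This yields $h_u(x) = \sup_I \log(xu)$ and $h_v(x) = \sup_I \log(v/x)$ as claimed, with the stated normalizations.

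For the full Thompson metric $d = \max\{d_1, d_2\}$, write $A_n := d_1(\mathbbm{1}, y_n)$, $B_n := d_2(\mathbbm{1}, y_n)$, and $M_n := d(\mathbbm{1}, y_n) = \max(A_n, B_n)$, assumed to tend to $\infty$. Using the identities already derived one has
\[
d(x,y_n) - M_n = \max\bigl(\sup_I \log(x \alpha_n u_n^{(1)}),\ \sup_I \log(\beta_n v_n^{(2)}/x)\bigr),
\]
with $\alpha_n := e^{A_n - M_n}$, $\beta_n := e^{B_n - M_n} \in (0,1]$ and $\max(\alpha_n,\beta_n) = 1$. Set $u_n := \alpha_n u_n^{(1)}$, $v_n := \beta_n v_n^{(2)}$; these again lie in $[0,1]^I$ and satisfy $\max(\sup u_n,\sup v_n) = 1$. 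Compactness of $[0,1]^I \times [0,1]^I$ extracts a subsequence along which $(u_n,v_n) \to (u,v)$, and passing to the limit gives the asserted form
\[
h_{u,v}(x) = \max\bigl(\sup_I \log(xu),\ \sup_I \log(v/x)\bigr),
\]
with $\sup_I \max\{u,v\} = 1$ preserved because $I$ is finite.

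The delicate step — and the one I expect to be the main obstacle — is the disjoint-support condition $uv = 0$. The key observation is the identity
\[
u_n(i)\,v_n(i) \;=\; \alpha_n \beta_n \cdot \frac{1}{y_n(i)\,e^{A_n}} \cdot \frac{y_n(i)}{e^{B_n}} \;=\; e^{A_n + B_n - 2M_n - A_n - B_n} \;=\; e^{-2M_n},
\]
which is independent of $i$ and tends to zero since $M_n \to \infty$. Hence $u(i)v(i) = 0$ for every $i \in I$ in the limit. What makes this step the subtle one is that it requires noticing that, after dividing the ``raw'' horofunction sequences by $e^{A_n}$ and $e^{B_n}$ and then multiplying by the rebalancing factors $\alpha_n,\beta_n$ (which record the relative dominance of the two halves of the metric), the $y_n$ dependence cancels exactly and the product collapses to a scalar depending only on $M_n$. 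Everything else — nonvanishing of $u$ and $v$ at the appropriate coordinates, coherence of the $\sup_I$ under the limit, and the outer $\max$ — reduces to routine passage to the limit on the finite index set $I$.
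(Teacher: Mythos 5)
Your argument is correct, and for the two halves $d_1$ and $d_2$ it is exactly the paper's route: normalize to $u_n=(\mathbbm{1}/y_n)e^{-d_1(\mathbbm{1},y_n)}$ and $v_n=y_n e^{-d_2(\mathbbm{1},y_n)}$, use finiteness of $I$ to extract a convergent subsequence, and note that $\sup_I u_n=\sup_I v_n=1$ survives the limit. Where you go beyond the paper is the case $d=\max\{d_1,d_2\}$: the paper only carries out the computation for each half and then ``summarizes'' the combined statement, in particular asserting $uv=0$ and $\sup_I\max\{u,v\}=1$ without writing out why. Your treatment supplies exactly the missing step. The rebalanced sequences collapse to $u_n=1/(y_n e^{M_n})$ and $v_n=y_n/e^{M_n}$ with $M_n=d(\mathbbm{1},y_n)$, so the pointwise product is the scalar $u_n(i)v_n(i)=e^{-2M_n}$, which tends to $0$ precisely because $y_n$ escapes to infinity in the metric; this gives $uv=0$ in the limit, while $\max(\alpha_n,\beta_n)=1$ gives the normalization $\sup_I\max\{u,v\}=1$. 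One small point worth making explicit: in the combined case one of $u$ or $v$ may vanish identically (when $\alpha_n\to 0$ or $\beta_n\to 0$), in which case the corresponding $\sup_I\log(xu)$ or $\sup_I\log(v/x)$ is $-\infty$ and the outer $\max$ is carried entirely by the other term; this is consistent with the statement, which only requires $\max\{u,v\}$ to be nontrivial, but it deserves a sentence so the reader does not worry about $\log 0$. With that caveat noted, the proof is complete and, for the full metric, more explicit than what the paper provides.
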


\subsubsection{Extension of the ideas to infinite dimensional spaces}

Identifying the metric functionals in cases of infinite dimensional spaces is more subtle, see for example \cite{Gutierrez2}. Especially for the $L^\infty$ space. We will take a look at a special case when we can actually determine the boundary for the Thompson version of the distortion metric.

\begin{align*}
    d(f,g)=\max\left \{\sup_{x\in I} \log\left|\frac{g'(x)}{f'(x)} \right |, \sup_{x\in I} \log\left|\frac{f'(x)}{g'(x)} \right |\right \}
\end{align*}

We choose again as basepoint in our function space the identity function $\mathbbm{I}(x) = x$. Note that
\begin{align*}
    d(\mathbbm{I},g)=\|\log(|g'|)\|_\infty.
\end{align*}

\begin{lemma}
    Consider the Thompson version of the distortion metrics used above,
    \begin{align*}
        d(f,g)=\max\left \{\sup_{x\in I} \log\left|\frac{g'(x)}{f'(x)} \right |, \sup_{x\in I} \log\left|\frac{f'(x)}{g'(x)} \right |\right \}
    \end{align*}
    then if $g_n$ is s sequence of $C^2$ functions such that $d(\mathbbm{I},g_n) \to \infty$, satisfying the following differential relation
    \begin{align*}
        \sup_I |g_n''| \leq C_0 \sup_I |g_n'|,
    \end{align*}
    for some $C_0 > 0$ independent of $n$, then there exists functions $u,v$ such that there is a subsequence that converges as
    \begin{align*}
        (d(f,g_n) - d(\mathbbm{I},g_n)) \to \sup_I \max \left \{\log(|f'|u),\log \left (\frac{1}{|f'|} v \right ) \right \}.
    \end{align*}
    Furthermore the functions $u,v$ are Lipschitz with constant $C_0$ such that $\sup_I \max\{u,v\} = 1$.
\end{lemma}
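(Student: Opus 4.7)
The plan is to mimic, in the function-space setting, the finite-dimensional Thompson calculation carried out just above. Setting $D_n := d(\mathbbm{I}, g_n)$ and splitting $d = \max\{d_1, d_2\}$ with $d_1(f, g) := \sup_I \log|g'/f'|$, $d_2(f, g) := \sup_I \log|f'/g'|$, one uses the identity $\sup_I A - c = \sup_I(A - c)$ to obtain
\[
d_1(f, g_n) - D_n = \sup_x \log(v_n(x)/|f'(x)|), \qquad d_2(f, g_n) - D_n = \sup_x \log(|f'(x)| u_n(x)),
\]
where $v_n(x) := |g_n'(x)|/e^{D_n}$ and $u_n(x) := 1/(|g_n'(x)| e^{D_n})$. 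Because $e^{D_n} = \max\{\sup_I |g_n'|, 1/\inf_I |g_n'|\}$, the functions $u_n, v_n$ take values in $[0, 1]$ and $\sup_I \max\{u_n, v_n\} = 1$. Taking the outer max rewrites $d(f, g_n) - D_n$ as the pre-limit $\sup_x \max\{\log(v_n/|f'|), \log(|f'| u_n)\}$ of the target expression.

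The main step is compactness via Arzel\`a--Ascoli on the compact interval $I$. The Lipschitz bound for $v_n$ is immediate from the hypothesis,
\[
|v_n'(x)| = \frac{|g_n''(x)|}{e^{D_n}} \leq \frac{C_0 \sup_I |g_n'|}{e^{D_n}} \leq C_0,
\]
since $e^{D_n} \geq \sup_I |g_n'|$. For $u_n$ I would pass to a subsequence and split cases according to which factor dominates $e^{D_n}$: in the regime $e^{D_n} = \sup_I|g_n'|$ with $\sup_I|g_n'| \cdot \inf_I|g_n'| \to \infty$ one has $u_n \to 0$ uniformly, so the limit is trivially $C_0$-Lipschitz; in the complementary regime $e^{D_n} = 1/\inf_I|g_n'|$ the roles of $u_n$ and $v_n$ swap and an analogous differential estimate applied to $1/|g_n'|$ near its maximum delivers the same $C_0$ bound for $u_n$.

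With uniform boundedness and a uniform Lipschitz constant in hand, a diagonal subsequence produces $u_n \to u$ and $v_n \to v$ uniformly on $I$, with $u, v$ being $C_0$-Lipschitz, $[0,1]$-valued, and satisfying $\sup_I \max\{u, v\} = 1$ (preserved under uniform convergence). Passing to the limit is then routine: since $f$ is a diffeomorphism of $I$, $|f'|$ is continuous and bounded above and away from $0$, so the uniform convergence of $u_n, v_n$ upgrades to uniform convergence of $\log(|f'| u_n)$ and $\log(v_n/|f'|)$ on the sets where they are finite, and hence of their suprema. This yields
\[
d(f, g_n) - d(\mathbbm{I}, g_n) \longrightarrow \sup_x \max\{\log(|f'(x)| u(x)), \log(v(x)/|f'(x)|)\}
\]
along the subsequence, as required.

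The hard part is producing the uniform Lipschitz bound on $u_n$: the naive pointwise derivative $|u_n'| = |g_n''|/(|g_n'|^2 e^{D_n})$ is not obviously controlled by $C_0$ near the minimum of $|g_n'|$, so the argument must exploit the two-sided structure of the Thompson metric via the case split above. The specific form of the hypothesis $\sup_I|g_n''| \leq C_0 \sup_I|g_n'|$ is tailored precisely to make both the direct bound for $v_n$ and the symmetric bound for $u_n$ come out with the same constant.
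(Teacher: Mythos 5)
Your overall route is the same as the paper's: set $D_n=d(\mathbbm{I},g_n)$, rewrite $d(f,g_n)-D_n$ in terms of the normalized functions $v_n=|g_n'|/e^{D_n}$ and $u_n=1/(|g_n'|e^{D_n})$, check $\sup_I\max\{u_n,v_n\}=1$, obtain a uniform Lipschitz bound, and conclude by Arzel\`{a}--Ascoli. The rewriting, the estimate $|v_n'|\le C_0$, and the limit passage (which is safe because $\sup_I\max\{|f'|u_n,v_n/|f'|\}$ stays bounded away from $0$) all match the paper's argument. You have also correctly located the one genuinely delicate step. The paper disposes of the reciprocal term (its $v_n$, your $u_n$) with the chain
\[
\frac{|g_n''|/|g_n'|^2}{e^{D_n}}\;\le\;\frac{C_0/\sup_I|g_n'|}{e^{D_n}}\;=\;C_0\sup_I\frac{1/|g_n'|}{e^{D_n}}\;\le\;C_0,
\]
whose first inequality would require $|g_n'(x)|\ge\sup_I|g_n'|$ pointwise and whose middle equality confuses $\sup_I(1/|g_n'|)$ with $1/\sup_I|g_n'|$; so the published estimate does not follow from the hypothesis $\sup_I|g_n''|\le C_0\sup_I|g_n'|$ either. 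Your suspicion is well placed.

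However, your proposed repair does not close the gap. In the regime $e^{D_n}=\sup_I|g_n'|$ you only treat the sub-case $\sup_I|g_n'|\cdot\inf_I|g_n'|\to\infty$; if instead this product stays bounded (it is $\ge 1$ in that regime), then $\sup_I u_n$ is bounded away from $0$, while $D_n\to\infty$ forces $\inf_I|g_n'|\to 0$, and the available pointwise bound $|u_n'|\le C_0/|g_n'|^2$ degenerates to $C_0/(\inf_I|g_n'|)^2\to\infty$ near the minimum of $|g_n'|$, so neither an equi-Lipschitz bound nor equicontinuity is produced. In the complementary regime $e^{D_n}=1/\inf_I|g_n'|$ the ``analogous differential estimate applied to $1/|g_n'|$'' would need $\sup_I\bigl(|g_n''|/|g_n'|^2\bigr)\le C_0\sup_I(1/|g_n'|)$, a control on the logarithmic derivative of $g_n'$ near its minimum that is not implied by $\sup_I|g_n''|\le C_0\sup_I|g_n'|$; the symmetry you invoke is not there because the hypothesis is itself not symmetric under $g_n'\mapsto 1/g_n'$. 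To genuinely obtain a $C_0$-Lipschitz limit $u$ one must either add such a reciprocal hypothesis (equivalently, impose equicontinuity of $u_n$ directly, as the paper's own remark after the lemma effectively concedes) or weaken the conclusion for $u$ to a subsequential pointwise limit.
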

\begin{proof}
    First note that
    \begin{align*}
        d(f,g_n) - d(\mathbbm{I},g_n) = \max\left \{\sup_{x\in I} \log\left|\frac{g_n'(x)}{e^{d(\mathbbm{I},g_n)}}\frac{1}{f'(x)} \right |, \sup_{x\in I} \log\left|f'(x)\frac{1/g_n'(x)}{e^{d(\mathbbm{I},g_n)}} \right |\right \}.
    \end{align*}
    Consider
    \begin{align*}
        u_n &= \frac{g_n'}{\exp(d(\mathbbm{I},g_n))}, \\
        v_n &= \frac{(1/g_n')}{\exp(d(\mathbbm{I},g_n))},
    \end{align*}
    then
    \begin{align*}
        |u_n|,|v_n| \leq \frac{e^{\pm \log(|g_n'|)}}{\exp(d(\mathbbm{I},g_n))} \leq 1.
    \end{align*}
    This implies that
    \begin{align*}
        |u_n'| &= \frac{|g_n''|}{\exp(d(\mathbbm{I},g_n))} \leq \frac{C_0\sup_I |g_n'|}{\exp(d(\mathbbm{I},g_n))} = C_0 \sup_I |u_n| \leq C_0 \\
        |v_n'| &= \frac{\left |\frac{g_n''}{(g_n')^2} \right |}{\exp(d(\mathbbm{I},g_n))} \leq \frac{C_0 \frac{1}{\sup_I |g_n'|}}{\exp(d(\mathbbm{I},g_n))} = C_0 \sup_I |v_n| \leq C_0.
    \end{align*}
    From this we get that there is a subsequence of $u_n,v_n$ that converges uniformly to $u_\infty,v_\infty$ which are Lipschitz with constant $C_0$. Furthermore, $\sup_I \max\{u_\infty,v_\infty\} = 1$.
\end{proof}
\begin{remark}
    Actually if we see the above proof, then the only thing we need is to make sure that $u_n',v_n'$ is uniformly continuous. This follows for instance if the modulus of continuity $\omega$ of $g'$ is bounded by $|g'|$.
\end{remark}

\subsection{A Thompson metric for distance functions: Proof of \cref{thm:exponential:expansion}}\label{sub:Thompson-dist}
Let $\Omega$ be a compact subset of a finite dimensional vector space with a norm $\|\cdot \|$.
Let us consider the space $X$ of metrics on $\Omega$ which are bi-Lipschitz equivalent to $d_0(x,y):=\|x-y \|$.
Specifically this means that $d \in X$ iff there exists a constant $C > 1$ such that
\begin{align*}
    \frac{1}{C} \|x-y\| \leq d(x,y) \leq C \|x-y\|, \quad \forall x,y \in \Omega.
\end{align*}
On the space $X$ we can consider a Thompson type metric, defined as
\begin{align*}
    D(d_1,d_2) = \log \left ( \max \left \{\sup_{x \neq y} \frac{d_1(x,y)}{d_2(x,y)},\sup_{x \neq y} \frac{d_2(x,y)}{d_1(x,y)} \right \} \right ), \quad d_1,d_2 \in X.
\end{align*}
To understand this metric, note that
\begin{align*}
    e^{-D(d_1,d_2)} d_1(x,y) \leq d_2(x,y) \leq e^{D(d_1,d_2)} d_1(x,y).
\end{align*}
It is now easy to see that $X$ is complete under the metric $D$.
Consider the mapping $\mathcal{F}: X \to L^\infty(\Omega \times \Omega) \cap C(\Omega \times \Omega \setminus \{x = y\})$, defined as
\begin{align*}
    \mathcal{F}(d)(x,y) = \log(d(x,y)) - \log(\|x-y\|),
\end{align*}
denote $Y = \mathcal{F}(X)$.
From the bi-Lipschitz condition we see that
$ \sup_{x,y} | \mathcal{F}(d) (x,y) |\leq \infty$,
furthermore
\begin{align*}
    \|\mathcal{F}d_1 - \mathcal{F}d_2\|_{L^\infty(\Omega \times \Omega)} = D(d_1,d_2).
\end{align*}
As such, we see that the mapping $\mathcal{F}$ is an isometric mapping of $X$ into $L^\infty(\Omega \times \Omega)$ with respect to the canonical metric on the Banach space $L^\infty(\Omega \times \Omega)$.

Note also, again since the $L^{\infty}$-norm does not change on a null set, that we may write
\begin{align*}
    \|f\|=\max \{ p(f),p(-f)\},
\end{align*}
where $p(f)$ is the hemi-norm (cf. \cite{GV})
\begin{align*}
    p(f)=\esssup_{z\in \Omega\times \Omega}e_z(f),
\end{align*}
where $e_z$ is the evaluation functional $e_z(f)=f(z)$. In the case where $f$ is continuous on $\Omega \times \Omega \setminus \{x = y\}$ we can write
\begin{align*}
    p(f)=\sup_{z\in \Omega\times \Omega \setminus \{x = y\} }e_z(f).
\end{align*}

Now we introduce layer maps. More precisely, we consider
maps $T:\Omega \rightarrow \Omega$ which are injective. As
explained above these induce non-expansive maps in the metric $D$:
\begin{align*}
    (T^*d)(x,y):=d(Tx,Ty).
\end{align*}
To any mapping $U:X \to X$ there is a corresponding map $U \mathcal{F}(d) = \mathcal{F}(Ud)$, and, due to the isometry property of $\mathcal{F}$, a non-expansive mapping on $X$ becomes a non-expansive mapping on $Y$. We take as usual a stationary sequence of such layer maps
$T_n$ and denote by $f_n=\mathcal F (T_n...T_1)^*(d_0)$, where $d_0$ denotes the metric coming from the initial norm. Note that $f_0 = \mathcal F d_0 = 0$.

We note that $a(0,n)=p(f_n-f_0) = p(f_n)$ is a subadditive process, or subadditive cocycle in the terminology of \cite{GK20} (hemi-metrics work the same since only the triangle inequality and the non-expansiveness is used). By the subadditive ergodic theorem there is a number $\lambda$ such that
\begin{align*}
    \lim_{n\rightarrow\infty}\frac 1n p(f_n)=\lambda.
\end{align*}
Moreover,  \cite[Theorem 1.1]{GK20} asserts that for any decreasing positive sequence $\delta_n\rightarrow 0$ there are times $n_i\rightarrow \infty$ such that for every $i$ and $n<n_i$
\begin{align*}
    p(f_{n_i}-f_{n})-p(f_{n_i}) \leq -n(\lambda-\delta_n)
\end{align*}
By compactness we may moreover assume that $h_{n_i}(g) = p(f_{n_i}  - g) - p(f_{n_i})$ converges to a metric functional $h$.

We now follow a similar reasoning to \cite[p. 349]{GV}. From the continuity off the diagonal for elements in $Y$, we see that, given $f,g\in Y$ and $\delta>0$ there is an off-diagonal point $\hat z=(x,y)$ independent of $g$ such that
\begin{align} \label{eq:def:z}
    p(f-g)-p(f)\geq p(f-g)-e_{\hat z}(f)-\delta \geq e_{\hat z}(f-g)-e_{\hat z}(f) -\delta=e_{\hat z}(-g)-\delta.
\end{align}
Let $z_i$ be the off-diagonal points from \cref{eq:def:z} corresponding to $h_{n_i}$ in the inequality above now with $\delta_{n_i}$.
By again passing to a subsequence we can ensure that the corresponding sequence of points $z_i$ converges to a point $z$ thanks to compactness of $\Omega\times \Omega$ and $\delta_i\rightarrow 0$.

We now end up with two cases, either the limit point $z$ is on the diagonal or it is off diagonal. Let us begin with the off-diagonal case:
If $z=(x,y)$ is off-diagonal, then \cref{eq:def:z} gives that
\begin{align*}
    h(g)\geq - e_z (g).
\end{align*}
This implies in view of the multiplicative ergodic theorem in \cite{GK20} that
\begin{align*}
    \lim_{n\rightarrow \infty} \frac 1n f_n(z)=\lambda,
\end{align*}
and more concretely,
\begin{align*}
    \lim_{n\rightarrow \infty} d_n(x,y)^{1/n}=e^{\lambda}.
\end{align*}
Since $d_n(x,y)=(T_n...T_1)^*(d_0)(x,y)=\|T_n...T_1x-T_n...T_1 y\|$ and $\Omega$ is bounded in $d_0$, we can only have this conclusion in case
$\lambda=0$.

In the second case, when $z=(x,x)$ is on the diagonal, then in the notation $z_i =(x_i,y_i)$ we get from above on the one hand, for fixed $f_n$ and
all $n_i>n$
\begin{align*}
    p(f_{n_i}-f_n)-p(f_{n_i})\geq -e_{z_i}(f_n)-\delta_{n}
\end{align*}
and on the other hand
\begin{align*}
    p(f_{n_i}-f_n)-p(f_{n_i})\leq -n(\lambda-\delta_n).
\end{align*}
This implies that
\begin{align*}
    -n(\lambda-\delta_n)\geq -\log d_n(x_i,y_i)+\log \| x_i-y_i \| -\delta_n
\end{align*}
\begin{align*}
    d_n(x_i,y_i)\geq \| x_i-y_i \| e^{n(\lambda-\delta_n(1+1/n))}
\end{align*}
Choose $\epsilon$, then choose $N$ s.t. $\delta_N(1+1/N) < \epsilon$, then for $n_i > n > N$ we get
\begin{align*}
    d_n(x_i,y_i)\geq \| x_i-y_i \| e^{n(\lambda-\delta_n(1+1/n))} \geq \| x_i-y_i \| e^{n(\lambda-\epsilon)}.
\end{align*}
In words this means that there are sequences of points $x_i\rightarrow x$ and $y_i\rightarrow x$
which realize the growth rate of the Lipschitz constant, or put even more strikingly, there is a point $x\in\Omega$
such that nearby points are separated by the maximum amount ($\lambda$) by the maps $T_nT_{n-1}...T_1$.

\section*{Acknowledgments}
The first author was supported by the Swedish Research Council grant dnr: 2019-04098. The second author was partly supported by Swiss NSF grant 200020\_159581.

\newpage
\bibliography{references}

\end{document}